\documentclass[accepted]{icmlconf}
\usepackage{multirow}

\icmltitlerunning{Latent Dynamics Geometries}

\begin{document}

\twocolumn[
  \icmltitle{Dynamics Are Learned, Not Told: Semi-Supervised Discovery of Latent Dynamics Geometries For Zero-Shot Policy Adaptation}



  \icmlsetsymbol{equal}{*}

  \begin{icmlauthorlist}
    \icmlauthor{Zhiming Xu}{tongji-cs}
    \icmlauthor{Weitao Zhou}{tsinghua-mech,simple-ai}
    \icmlauthor{Xianghui Pan}{tongji-ee}
    \icmlauthor{Nanshan Deng}{tsinghua-mech,rimian-robotics}
    \icmlauthor{Chengju Liu}{tongji-ee}
    \icmlauthor{Qijun Chen}{tongji-ee}
    \icmlauthor{Chenpeng Yao}{tongji-ee}
  \end{icmlauthorlist}

  \icmlaffiliation{tongji-cs}{School of Computer Science and Engineering, Tongji University, Shanghai, China}
  \icmlaffiliation{tongji-ee}{Department of Electronics and Information Engineering, Tongji University, Shanghai, China}
  \icmlaffiliation{tsinghua-mech}{School of Vehicle and Mobility, Tsinghua University, Beijing, China}
  \icmlaffiliation{simple-ai}{Simple AI, Beijing, China}
  \icmlaffiliation{rimian-robotics}{Rimbot, Beijing, China}

  \icmlcorrespondingauthor{Weitao Zhou, Chenpeng Yao}{zhouwt801@gmail.com, yaochenpeng@tongji.edu.cn}

  \icmlkeywords{Reinforcement Learning, Dynamics Adaptation, Contrastive Learning, Latent Geometry}

  \vskip 0.1in
]



\printAffiliationsAndNotice{}  

\begin{abstract}
  Real-world dynamics shifts pose a critical challenge for reinforcement learning in robotics, as policies tightly coupled to nominal environments often fail catastrophically when physical conditions change. Most existing methods rely on encoding explicitly identified physical parameters into a latent context, a parameter-centric paradigm that depends on pre-specified axes of variation and becomes brittle under unmodeled or compound dynamics changes. We revisit dynamics adaptation from an outcome-centric perspective: rather than telling policies what the dynamics are, we enable them to learn how dynamics affect interaction outcomes. Theoretically, this is grounded in a monotonic relationship between target-domain regret and the Lipschitz constant of a trajectory dynamics encoder. Practically, this constant can be upper-bounded through contrastive learning, yielding a smooth, task-relevant latent topology without privileged dynamics information. On MuJoCo benchmarks, our method consistently outperforms parameter-centric baselines under severe dynamics shifts, including unmodeled and time-varying parameters, while also improving in-distribution stability and latent interpretability. Overall, these results validate that controlling latent geometry is a principled mechanism for robust adaptation.
\end{abstract}
\section{Introduction}
\begin{figure}[htbp]
    \centering
    \includegraphics[width=1.0\linewidth]{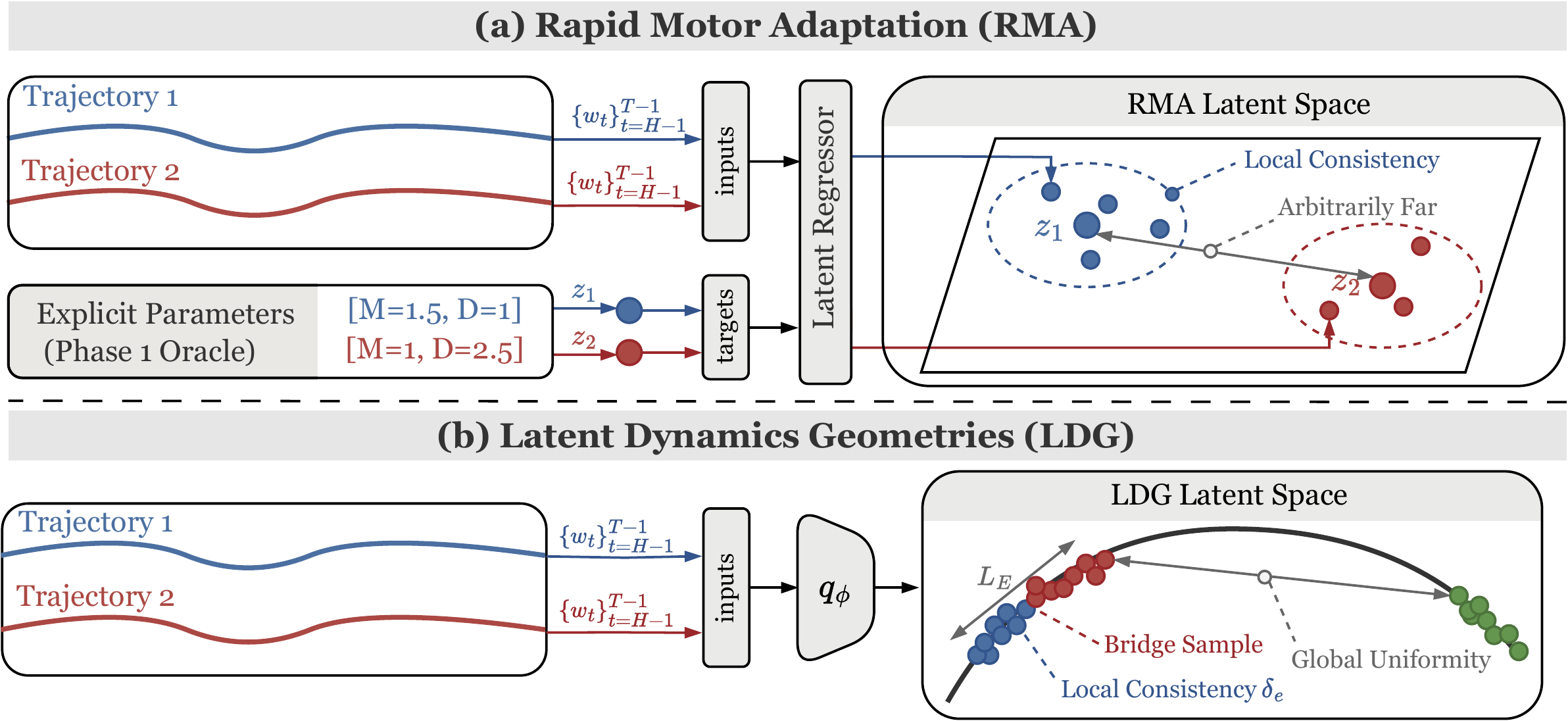}
    \caption{Conceptual comparison of adaptation paradigms. (a) RMA (parameter-centric) uses a trajectory encoder to regress oracle output ($z_1$ and $z_2$), which are functionally similar trajectories but are mapped to arbitrary distances since their parameters differ. (b) Our method LDG (outcome-centric) learns a latent dynamics geometry directly from trajectory outcomes. By enforcing local consistency ($\delta _e$) and global uniformity via contrastive learning, we construct a smooth manifold (characterized by bounded small $L_{E}$) that enables robust zero-shot adaptation.}
    \label{fig:latent-space-properties}
\end{figure}
\par Model-free Deep Reinforcement Learning (DRL) excels at mastering complex control tasks by exploiting the specific transition dynamics of the training environment. Through trial and error, agents internalize precise correlations between actions and state evolution, for instance, exactly how much momentum is required to brake safely. However, this specialization becomes a liability when physical properties shift, such as when a robot carries an unknown payload or suffers mechanical wear. In these Out-Of-Distribution (OOD) scenarios, policies optimized for the nominal dynamics often fail catastrophically; a braking maneuver valid for a light robot may result in collision for a heavier one.
\par A prominent line of work, represented by Rapid Motor Adaptation (RMA~\cite{rma}), addresses this challenge by conditioning policies on a compression of system dynamics parameters. During deployment, this latent context is inferred from short interaction histories by an adaptation module~\cite{rma,rma-with-fine-tuning,rma-in-hand-object-rotation,rma-general-manipulation}. While effective in practice, this paradigm is inherently \textit{parameter-centric}: rather than allowing the policy to learn to select the most relevant features end-to-end, this approach imposes a manual inductive bias that effectively hard-codes the feature space to pre-specified axes of variation.
\par We argue this parameter-centric view obscures two fundamental intricacies. First, real robotic systems are subject to a wide range of unmodeled, coupled~\cite{off-dynamics-classifier,off-dynamics-classifier-with-imitation}, and time-varying factors~\cite{time-varying-factors}, for which no fixed set of physical parameters provides a complete description. Moreover, distinct physical phenomena often induce indistinguishable effects on motion; for example, increased payload and higher friction all manifest similarly as ``resistance to acceleration''. Consequently, the quantities most relevant to adaptation are not the physical parameters themselves, but their realized effects on interaction outcomes.
\par Motivated by this observation, we adopt an \textit{outcome-centric} approach to adaptation. By integrating contrastive learning~\cite{simclr} into a variational inference framework~\cite{beta-vae}, we explicitly enforce invariance within the same interaction regime while preserving separability across regimes that demand different control responses. We further investigate how latent space perturbations translate to output instability, and show theoretically that controlling representation sensitivity, characterized by the encoder's Lipschitz constant, upper-bounds the sub-optimality of cross-domain adaptation. Crucially, naively enforcing smoothness is insufficient; successful adaptation requires simultaneous enforcement of smoothness and topological structure within the latent space. We show that contrastive learning naturally promotes such a topology, characterized by local consistency and global uniformity (Fig.~\ref{fig:latent-space-properties}(b)). From a practical perspective, contrastive learning effectively filters nuisance variables via gradient orthogonality, ensuring the latent space encodes only task-relevant dynamics, a desired property for latent-conditioned policies~\cite{latent-bisimulation-metric,more-robust-bisimulation} but often absent in purely reconstructive baselines. Empirically, this geometry-aware representation improves both In-Distribution (ID) stability and zero-shot generalization across diverse OOD environments, including settings with unmodeled and time-varying physical variations.
\section{Related Work}
\par \textbf{Dynamics Adaptation.} Adaptation to dynamics shifts is a central challenge in robotics, where transition dynamics in the target environment differ from the training domain. Domain Randomization (DR~\cite{domain-randomization,dynamics-randomization}) trains a single  policy across diverse dynamics but may sacrifice optimality. Off-Dynamics Reinforcement Learning~\cite{off-dynamics-classifier} utilizes limited target domain interactions to amend the source policy, often via domain classifiers that discourage reliance on source-specific dynamics~\cite{off-dynamics-classifier,off-dynamics-classifier-with-imitation}. Meta-RL approaches learn dynamics priors for rapid adaptation~\cite{meta-rl-dynamics-prior}, but require online gradient updates at deployment. Beyond robustness, another line of work seeks stronger zero-shot target performance under physical-parameter shifts by adapting the policy using a compact context that captures the underlying dynamics variation. A representative explicit paradigm, exemplified by Rapid Motor Adaptation (RMA~\cite{rma}), uses supervision over dynamics parameters (e.g., mass and friction) and maps them to latent variables to modulate the policy. Generation of these latent variables during deployment are often achieved via an adaptation module that estimates latents from trajectory histories~\cite{rma,rma-with-fine-tuning,rma-in-hand-object-rotation,rma-general-manipulation} or test-time optimization~\cite{so-cma,robot-trains-robot}. This parameter-centric design can become brittle when dynamics shifts are unmodeled~\cite{off-dynamics-classifier,off-dynamics-classifier-with-imitation} or time-varying~\cite{time-varying-factors}, moreover this reliance on privileged information proves complicated or even infeasible when external objects exists~\cite{rma-in-hand-object-rotation,rma-general-manipulation,vae-assistive}. In contrast, implicit approaches infer dynamics-relevant context directly from interaction histories, for example, via trajectory dynamics encoders~\cite{forward-backward-dynamics-model} or latent optimization~\cite{so-coordinate-descent}. Building upon implicit approaches, we further investigate how latent space perturbations translate to output instability, drawing parallels to stability challenges in image generation~\cite{spectral-norm-for-dl,spectral-norm-for-gan,ortho-norm-for-large-gan,consistency-models}. Extending this framework to policy learning, we reveal that controlling network smoothness is a fundamental principle shared across generative vision and policy adaptation. Crucially, our method simultaneously enforces smoothness and latent geometry for robust adaptation.

\par \textbf{Latent Inference For Control.} Many implicit adaptation methods rely on variational inference~\cite{vae} to construct a compact context for decision making, following the success of latent dynamics models in image-based domains~\cite{embed-to-control,dreamer}. While our work also learns latent dynamics models, the latent variables in our setting represent dynamics variation rather than low-dimensional image compression. In this sense, our formulation is closer to skill-based latent representations~\cite{diayn,dads}, where each latent conditions a distinct expert behavior, and recent works have further extended these ideas toward dynamics-aware formulations~\cite{dars,latent-dynamics-under-changes}. Related ideas also appear in interaction settings, where latent variables encode beliefs over intent from histories~\cite{vae-assistive,lili,rili}. Similarly, Meta-RL uses inference networks to summarize interaction histories into latent belief states for faster adaptation~\cite{model-agnostic-meta-learning,offpolicy-meta-rl-with-context,varibad}. Inspired by these trajectory-based inference frameworks, we adopt a probabilistic latent approach for modeling dynamics variation, which exhibits longer temporal dependencies and substantially higher intrinsic dimensionality than intent or task context. This highlights the need to go beyond inference alone for dynamics adaptation.

\par \textbf{Geometry-Regularized Representations.} Representation smoothness and consistency is a well-investigated topic in the field of image generation to enhance training stability and sample quality~\cite{spectral-norm-for-dl,spectral-norm-for-gan,ortho-norm-for-large-gan,consistency-models}. In the context of dynamics adaptation, latent sensitivity governs how trajectory-level distribution shifts map to latent perturbations and thus closed-loop policy deviation, making objectives that regularize neighborhood structure and smoothness crucial for robust adaptation~\cite{darla,mdp-homomorphic,latent-bisimulation-metric}. While Variational AutoEncoders (VAEs~\cite{vae}) provide a tractable framework, the standard variational lower bound primarily optimizes for reconstruction fidelity and may overlook global latent topology~\cite{align-uniform}. This limitation has motivated a shift toward objectives that more directly shape latent structure. CURL~\cite{curl} demonstrates that contrastive instance discrimination yields more sample-efficient control representations than generative modeling, and SPR~\cite{spr} shows that enforcing multi-step temporal consistency in the latent space outperforms the standard variational objective for policy learning. However, these methods mainly study representation quality and sample efficiency, and to the best of our knowledge do not explicitly connect latent geometry to zero-shot dynamics adaptation performance. Drawing inspiration from prior works that translates bisimulation metrics in state space to distances in latent space to increase policy robustness~\cite{latent-bisimulation-metric,more-robust-bisimulation}, our work captures physical semantics to shape latent space geometry, enabling zero-shot dynamics adaptation.
\section{Preliminaries}
Throughout this paper, we use subscript $T$ and $S$ to denote quantities in the target domain and source domain respectively, such that $V^{\pi _S}_S$ denotes the value function for source policy $\pi_S$ in the source domain. We consider two Markov Decision Processes (MDPs), $\mathcal{M}_S$ and $\mathcal{M}_T$, that share the same state space, action space and reward function but differ only in transition dynamics $p_S$ and $p_T$, induced by changes in physical parameters (for a more rigorous definition, refer to Appendix~\ref{subsubsec:app-dynamics-mdps}). Given a trajectory $\tau=\{s_i,a_i\}_{i=1}^{L-1}$, define the length-$H$ window $w_t=\{s_i,a_i\}_{i=t-H}^{t-1}$, from which task-relevant representations are inferred via an encoder, given by $z_t=E(w_t)$. We learn in $\mathcal{M}_S$ an encoder $E$ and latent-conditioned policy $\pi(\cdot| s,z_S)$ that generalizes to $\mathcal{M}_T$ by utilizing the corresponding latent dynamics context $z_T$.
\par Let $\rho_t(\pi,p)(w)$ be the marginal distribution of $w_t$ and $d_{t}^{\pi,p}(s_t)$ be the probability of being in state $s_t$ at time step $t$ under policy $\pi$ and dynamics $p$ (see Appendix~\ref{subsubsec:app-window-distribution} for their analytical forms). The following definitions are established to aid subsequent analysis.
\begin{definition}\label{def:dynamics-close-mdps}
Two MDPs $\mathcal{M}_S$ and $\mathcal{M}_T$ are $\epsilon_p$-close in dynamics, if they share the same state space, action space and reward function, but differ in transition dynamics $p_{S}(s'|s,a)$ and $p_{T}(s'|s,a)$, such that:
\begin{equation}
D_{TV}(p_{T}(\cdot|s,a),p_{S}(\cdot|s,a)) \leq \epsilon_p \quad \forall (s,a)
\end{equation}
\end{definition}

\begin{definition}\label{def:policy-smoothness}
A latent-conditioned policy $\pi(\cdot|s,z)$ is $L_{\pi}$-smooth, if there exists a constant $L_{\pi}$ such that:
\begin{equation}
\sup_{s} D_{TV}(\pi(\cdot|s, z_{S}), \pi(\cdot|s, z_{T})) \leq L_{\pi} \|z_{S} - z_{T}\|_2
\end{equation}
\end{definition}

\begin{definition}\label{def:latent-centroid}
For a fixed policy $\pi$ and transition dynamics $p$, the time-indexed latent centroid $\mu_{t}(\pi,p)$ is the expected encoding of the window at that specific time step:
\begin{equation}
\mu_{t}(\pi,p)=\mathbb{E}_{w\sim\rho_{t}(\pi,p)}[E(w)]
\end{equation}
\end{definition}

\begin{definition}\label{def:temporal-consistency}
An encoder is $\delta_{e}$-consistent if:
\begin{equation}
\sup _t\sup _{w \sim \rho_t(\pi, p)} \| E(w) - \mu_t(\pi, p) \|_2\leq \delta _e
\end{equation}
\end{definition}

\begin{definition}\label{def:encoder-smoothness}
The encoder Lipschitz constant $L_E$ with respect to the total variation divergence of the induced window distributions is defined as:
\begin{equation}
L_{E}=\sup_{t}\sup_{(\pi,p),(\tilde{\pi},\tilde{p})}\frac{\left\lVert \mu_{t}(\pi,p)-\mu_{t}(\tilde{\pi},\tilde{p})\right\rVert _2}{D_{TV}(\rho_{t}(\pi,p),\rho_{t}(\tilde{\pi},\tilde{p}))}
\end{equation}
The encoder is considered $L_E$-smooth if $L_{E}<\infty$.
\end{definition}

\begin{definition}\label{def:dynamics-support-set}
Let $\mathcal{W}$ be the space of trajectory windows, the support of physically feasible windows under dynamics $p$ and policy $\pi$ is defined as:
\begin{equation}
\operatorname{supp}(\pi,p)=\{w\in\mathcal{W}\mid\forall  (s,a,s')\in w,  p  (s'|s,a)>0\}
\end{equation}
\end{definition}
\section{Methods}
\subsection{Performance Gap Under Dynamics Shift}
We relate the performance of an adaptive policy to that of an oracle policy optimized in the target MDP, and present the following theorem to reveal the relationship between sub-optimality gap (regret) and encoder properties.
\begin{assumption}\label{assump:initialization-shift-bound}
There exists a constant $C_d<1$ such that $\sup_{t} D_{TV}(d^{\pi_S,p_S}_t(s_t),d^{\pi_T,p_T}_t(s_t))\leq C_d$.
\end{assumption}
\begin{remark}
This assumption is theoretically justified by the ergodicity of the underlying MDP, where the transition operator induced by a fixed policy and dynamics acts as a contraction mapping on the space of probability measures.
\end{remark}
\begin{assumption}\label{assump:source-training-optimality}
The latent-conditioned policy $\pi_S=\pi(\cdot|s,z_S)$ approximates the source oracle $\pi_S^*=\pi(\cdot|s)$ within a bounded error $\delta_{train}$ during training.
\end{assumption}

\begin{theorem}[Target Domain Regret Bound]\label{thm:regret-bound}
Let $\mathcal{M}_S$ and $\mathcal{M}_T$ be two MDPs that are $\epsilon _p$-close in dynamics. If policy $\pi$ is $L_{\pi}$-smooth, encoder $E$ is $L_{E}$-smooth, $\delta_e$-consistent in both MDPs, Assumption~\ref{assump:initialization-shift-bound} and~\ref{assump:source-training-optimality} hold, then the bound for target domain regret is monotonically increasing with respect to $\delta_e$ and $L_{E}$.
\end{theorem}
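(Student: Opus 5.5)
We will obtain an explicit upper bound on the target regret $V_T^{\pi_T^*}-V_T^{\pi_T}$, where $\pi_T=\pi(\cdot|s,z_T)$ is the adapted latent-conditioned policy, and then observe that this bound is a nondecreasing affine function of $\delta_e$ and $L_E$. The regret splits into three terms:
\begin{equation*}
V_T^{\pi_T^*}-V_T^{\pi_T}=\bigl(V_T^{\pi_T^*}-V_T^{\pi_S^*}\bigr)+\bigl(V_T^{\pi_S^*}-V_T^{\pi_S}\bigr)+\bigl(V_T^{\pi_S}-V_T^{\pi_T}\bigr).
\end{equation*}
The first term is an oracle mismatch. We bound it by a simulation-lemma argument: use $\epsilon_p$-closeness (Definition~\ref{def:dynamics-close-mdps}) together with source-optimality of $\pi_S^*$, which gives $O(\epsilon_p R_{\max}/(1-\gamma)^2)$. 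The second term is at most $\delta_{train}$ times a horizon factor, by Assumption~\ref{assump:source-training-optimality}. Neither of these terms involves the encoder. The third term carries the dependence on $\delta_e$ and $L_E$.

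For the third term we apply the performance-difference lemma in $\mathcal{M}_T$. This yields a sum over $t$ of $\gamma^t\,\mathbb{E}_{s\sim d_t}\,D_{TV}(\pi(\cdot|s,z_S),\pi(\cdot|s,z_T))\cdot 2R_{\max}/(1-\gamma)$. By $L_\pi$-smoothness (Definition~\ref{def:policy-smoothness}), each summand is controlled by $L_\pi\|z_S-z_T\|_2$. We decompose this latent gap through the centroids:
\begin{equation*}
\|z_S-z_T\|_2\le \|z_S-\mu_t^S\|_2+\|\mu_t^S-\mu_t^T\|_2+\|\mu_t^T-z_T\|_2\le 2\delta_e+L_E\,D_{TV}\bigl(\rho_t(\pi_S,p_S),\rho_t(\pi_T,p_T)\bigr).
\end{equation*}
The two outer terms are handled by $\delta_e$-consistency in both MDPs (Definition~\ref{def:temporal-consistency}). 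The middle term is handled by Definitions~\ref{def:latent-centroid} and~\ref{def:encoder-smoothness}.

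It remains to bound the window TV distance uniformly in $t$ by a quantity that does not depend on $\delta_e$ or $L_E$; this is the main obstacle. The window $w_t$ factors as an initial state $s_{t-H}$ followed by $H$ action and transition steps. A standard chain-rule/coupling argument for TV on product-type distributions bounds the window gap by
\begin{equation*}
D_{TV}\bigl(d^{S}_{t-H},d^{T}_{t-H}\bigr)+H\bigl(\epsilon_p+\textstyle\sup_s D_{TV}(\pi_S(\cdot|s),\pi_T(\cdot|s))\bigr).
\end{equation*}
Assumption~\ref{assump:initialization-shift-bound} bounds the first piece by $C_d$. The policy piece is circular, because it again depends on $\|z_S-z_T\|$. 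To avoid the circularity, we bound it trivially by $1$, so the window TV is at most $\min\{1,\,C_d+H(\epsilon_p+1)\}\le 1$. Alternatively, we simply bound it by the crude constant $C_d+H\epsilon_p$, which we keep as a fixed constant $C_w$. Either choice keeps the bound free of the encoder quantities. The fallback is the trivial bound $D_{TV}\le 1$, which still suffices for the monotonicity claim.

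Collecting terms gives
\begin{equation*}
\text{Regret}\le \frac{2R_{\max}\epsilon_p}{(1-\gamma)^2}+c\,\delta_{train}+\frac{2R_{\max}L_\pi}{(1-\gamma)^2}\bigl(2\delta_e+L_E C_w\bigr).
\end{equation*}
All coefficients here are nonnegative, so the bound is monotonically increasing in $\delta_e$ and $L_E$, as the theorem claims. We must also check that the supremum over $t$ in Definition~\ref{def:encoder-smoothness} is taken before the sum over $t$, so that the geometric series produces the $(1-\gamma)^{-1}$ factors and introduces no further dependence on $t$.
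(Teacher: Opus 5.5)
Your argument goes through once you drop one invalid option (see the second paragraph), but it takes a genuinely different route from the paper's.

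\textbf{How the two routes differ.} The paper chains across domains, $V_T^{\pi_T^*}\to V_S^{\pi_S^*}\to V_S^{\pi_S}\to V_T^{\pi_T}$. It handles the last link with an extended performance-difference lemma that compares two different (policy, dynamics) pairs. This produces a state-marginal term, bounded by $C_d$ via Assumption~\ref{assump:initialization-shift-bound}, plus the same policy term $L_\pi(2\delta_e+L_E\Delta_t^\rho)$ that you obtain. The paper does not cut the circularity you identified. It keeps $\Delta_t^\rho$ unknown and derives the recursion $\Delta_t^\rho\le a\sum_{k=t-H}^{t-1}\Delta_k^\rho+b$, with $a=L_\pi L_E$ and $b=C_d+H(2L_\pi\delta_e+\epsilon_p)$. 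It solves this via the positive root $r_0$ of $r^H(r-1)=a(r^H-1)$. It then gets monotonicity by differentiating the worst-case sequence $f(t)=a\sum_{k=t-H}^{t-1}f(k)+b$ in $a$ and $b$.

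You instead stay inside $\mathcal{M}_T$ and use the ordinary PDL, so no state-marginal term appears and Assumption~\ref{assump:initialization-shift-bound} is never needed. You break the loop with $D_{TV}\le 1$.

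\textbf{What each buys.} Your bound is elementary, always finite, and monotone by inspection. The paper's closed form contains $\sum_t(\gamma r_0)^t$ and is finite only when $\gamma r_0<1$. The price is that your bound no longer shows how encoder sensitivity compounds in closed loop: latent error, then action error, then window divergence, then latent error again. That compounding is what the $1/(1-aH)$ factor and the growth of $r_0$ with $L_E$ express. Your bound also loses the dependence of the window divergence on $\delta_e$ through $b$. Since the theorem only asserts that ``the bound'' is monotone, you establish it for a cruder affine bound in which monotonicity is essentially built in.

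\textbf{Corrections.} Delete the alternative $C_w=C_d+H\epsilon_p$. It discards the in-window policy-mismatch sum $\sum_k\mathbb{E}\,D_{TV}(\pi(\cdot|s_k,z_k),\tilde\pi(\cdot|s_k,\tilde z_k))$. That sum is nonnegative and, by your own estimate, is controlled only by $L_\pi(2\delta_e+L_E\Delta_k^\rho)$. So $C_d+H\epsilon_p$ is not a valid encoder-free bound; take $C_w=1$.

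Also state explicitly that in $V_T^{\pi_S}$ the policy is fed latents from the source window distribution, not from the target history. Otherwise $\pi_S$ and $\pi_T$ coincide in $\mathcal{M}_T$ and your third term vanishes. This is the same implicit coupling the paper's extended PDL uses.

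Finally, the paper reads $\delta_{train}$ as the value gap $\|V_S^{\pi_S^*}-V_S^{\pi_S}\|_\infty$ in $\mathcal{M}_S$. Under that reading, transferring your second term to $\mathcal{M}_T$ costs two simulation-lemma terms. These are encoder-free and do not affect monotonicity.
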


Proof is provided in Appendix~\ref{subsec:app-regret-bound}, within which we also derive a closed-form bound for target regret (Eq. \eqref{eq:app-closed-form-target-regret}). We utilized the following decomposition during the proof:
\begin{equation}
\begin{aligned}
& \left\lVert V_{T}^{\pi _T^*}-V_{T}^{\pi _T} \right\rVert _{\infty} \leq \underbrace{\left\lVert V_{T}^{\pi _T^*}-V_{S}^{\pi _S^*}\right\rVert _{\infty}}_{\text{(I) Oracle Difference}} \\ & \qquad +\underbrace{\left\lVert V_{S}^{\pi _S^*}-V_{S}^{\pi _S}\right\rVert _{\infty}}_{\text{(II) Training Error}} +\underbrace{\left\lVert V_{S}^{\pi _S}-V_{T}^{\pi _T}\right\rVert _{\infty}}_{\text{(III) Policy Adaptation}}
\end{aligned}
\label{eq:target-domain-regret-decomp}
\end{equation}
which shows target regret is reduced when policy adaptation error (term (III)) is minimized. Furthermore, we prove that term (III) is monotonic with respect to encoder Lipschitz constant $L_{E}$, which controls how trajectory-level distributional shifts translate into latent perturbations. Intuitively, a smooth encoder (small $L_{E}$) maps trajectories generated by similar dynamics to nearby regions in the representation space, ensuring that small dynamics variations induce bounded changes in the latent code. This perceptual stability in turn promotes policy stability when the policy itself is smooth (Definition~\ref{def:policy-smoothness}). For conventional VAE~\cite{vae}, decoder reconstruction loss only force latent clusters under similar dynamics to be \textit{different}, rather than \textit{proximate}, in the latent space, essentially causing a larger $L_{E}$ and thus worse in-distribution stability. The ideology of translating dynamics similarity into latent vector distance is consistent with contrastive learning, where representations are formed by clustering similar (positive) samples and separating dissimilar (negative) samples. We elaborate this idea in the following subsection.

\subsection{Contrastive Learning For Latent Geometry}
In this subsection we first prove that optimizing the InfoNCE Loss~\cite{simclr} upper-bounds $L_{E}$, therefore tightens the bound for target regret according to Theorem~\ref{thm:regret-bound}, then explain how this loss helps distilling dynamics-related information from trajectory histories. Firstly, the InfoNCE Loss for a positive pair of examples $(i,j)$ is defined as:
\begin{equation}
\ell_{i,j}=-\log\frac{\exp(\operatorname{sim}(z_i,z_j)/\lambda)}{\sum_{k\neq i}\exp(\operatorname{sim}(z_i,z_k)/\lambda)}
\label{eq:infonce-original}
\end{equation}
where $\lambda$ is the temperature coefficient, and $\operatorname{sim(\cdot,\cdot)}$ denotes a similarity measure, chosen as cosine similarity in this paper. Following the analysis by~\cite{align-uniform}, the InfoNCE Loss asymptotically decomposes (as the number of negative samples $N \to \infty$) into Alignment and Uniformity:
\begin{equation}
\begin{aligned}
\lim_{N \to \infty} \mathcal{L}_{InfoNCE} \propto \, & \mathbb{E}_{(w, w^-) \sim \rho_{data}} \left[ e^{-\|E(w) - E(w^-)\| _2^2} \right] + \\
& \mathbb{E}_{(w, w^+) \sim \rho_{pos}} \left[ \|E(w) - E(w^+)\| _2^2 \right]
\end{aligned}
\label{eq:infonce-decomposition}
\end{equation}
where the first term is the Uniformity Loss $\mathcal{L}_{uniform}$, which encourages features to be uniformly distributed on a unit hypersphere, and the second term is the Alignment Loss $\mathcal{L}_{align}$ that encourages positive pair to be mapped to nearby features. Given this decomposition, we provide the following theorem stating through the optimization of InfoNCE Loss, specifically $\mathcal{L}_{align}$, encoder Lipschitz constant is effectively upper-bounded.
\begin{theorem}\label{thm:infonce-intra-inter}
Assume the measure of the support intersection $S = \operatorname{supp}(\pi,p) \cap \operatorname{supp}(\tilde{\pi},\tilde{p})$ satisfies $\mathbb{P}(S) > \alpha$ for some $\alpha > 0$, then $L_{E}$ is strictly upper-bounded by the square root of $\mathcal{L}_{align}$.
\end{theorem}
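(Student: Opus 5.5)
The plan is to bound the numerator of Definition~\ref{def:encoder-smoothness}, $\|\mu_t(\pi,p)-\mu_t(\tilde\pi,\tilde p)\|_2$, by $\sqrt{\mathcal{L}_{align}}$, and to bound the denominator $D_{TV}(\rho_t(\pi,p),\rho_t(\tilde\pi,\tilde p))$ from below by a quantity that depends only on $\alpha$. Taking the supremum over $t$ and over pairs then gives $L_E \le c(\alpha)\sqrt{\mathcal{L}_{align}}$.

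\textbf{Step 1 (numerator via a coupling).} Positive pairs $(w,w^+)\sim\rho_{pos}$ are windows drawn from regimes the contrastive objective treats as equivalent. Write $\rho=\rho_t(\pi,p)$ and $\tilde\rho=\rho_t(\tilde\pi,\tilde p)$. I will couple $\rho$ and $\tilde\rho$ through the positive-pair distribution restricted to the shared support $S$, so that $w\sim\rho$ and $w^+\sim\tilde\rho$ conditionally on $S$. Since the encoder outputs lie on the unit hypersphere, we have $\|E(w)-E(w')\|_2\le 2$. The mean difference then splits as
\begin{equation*}
\|\mu_t-\tilde\mu_t\|_2 \le \bigl\|\mathbb{E}[E(w)-E(w^+)\mid S]\bigr\|_2 + 2\,(1-\mathbb{P}(S)).
\end{equation*}
Jensen's (Cauchy--Schwarz) inequality gives
\begin{equation*}
\bigl\|\mathbb{E}[E(w)-E(w^+)\mid S]\bigr\|_2 \le \sqrt{\mathbb{E}[\|E(w)-E(w^+)\|_2^2\mid S]} \le \sqrt{\mathcal{L}_{align}/\mathbb{P}(S)} < \sqrt{\mathcal{L}_{align}/\alpha}.
\end{equation*}
The last inequality is where $\mathbb{P}(S)>\alpha$ enters, and it also supplies the strict inequality. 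This step also uses $\delta_e$-consistency (Definition~\ref{def:temporal-consistency}) to justify replacing $\mu_t$ by conditional means on $S$: each such replacement costs at most $\delta_e$, which is itself controlled by alignment because positives come from the same regime. The off-support remainder $2(1-\mathbb{P}(S))$ should be absorbed using the denominator, as Step 2 explains.

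\textbf{Step 2 (denominator).} The off-support mass satisfies $\rho(S^c)+\tilde\rho(S^c)\le 2\,D_{TV}(\rho,\tilde\rho)$. The remainder term from Step 1 is therefore proportional to the TV distance, and dividing through contributes only a constant, which can be folded in. On $S$ I will treat the TV distance as non-degenerate. Where $\rho$ and $\tilde\rho$ coincide, the ratio's numerator also vanishes by consistency, so those pairs do not affect the supremum. Combining the two steps gives $L_E\le C(\alpha)\sqrt{\mathcal{L}_{align}}$ with $C(\alpha)$ of order $1/\sqrt{\alpha}$ times the inverse of the minimal TV separation between distinct regimes.

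\textbf{Main obstacle.} The hardest part is lower-bounding the TV denominator uniformly over all pairs $(\pi,p),(\tilde\pi,\tilde p)$. Near-identical regimes could make the ratio blow up unless the numerator shrinks at least as fast. My first attempt is to show that the numerator scales with the TV distance: bound $\|\mu_t-\tilde\mu_t\|_2 = \|\int E\,d(\rho-\tilde\rho)\|_2$. The contribution from outside $S$ is at most $2D_{TV}$. Inside $S$, the integrand $E(w)$ deviates from a common center by at most $\sqrt{\mathcal{L}_{align}/\alpha}$, which gives the bound $2D_{TV}\sqrt{\mathcal{L}_{align}/\alpha}$ plus $2D_{TV}$ times the off-$S$ spread. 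This "centered integrand" argument, $\|\int (E-c)\,d(\rho-\tilde\rho)\|\le 2\sup\|E-c\|\,D_{TV}$, is the route I would pursue. It makes the TV factor cancel directly, so it avoids the lower-bound issue entirely.
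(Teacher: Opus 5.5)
\textbf{There is a genuine gap in Step 1, and the fallback does not close it.}

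Your Step 1 relies on a coupling that the alignment loss does not supply. In this paper a positive pair consists, by construction of $\mathcal{P}(i)$, of two windows generated under the \emph{same} dynamics. Accordingly, $\mathcal{L}_{align}(\pi,p)$ is an expectation over $w,w^+$ drawn i.i.d.\ from the single regime $\rho_{pos}(\pi,p)$. No pair distribution has $w\sim\rho_t(\pi,p)$ and $w^+\sim\rho_t(\tilde\pi,\tilde p)$ for two distinct regimes. Your inequality $\|\mathbb{E}[E(w)-E(w^+)\mid S]\|_2\le\sqrt{\mathcal{L}_{align}/\mathbb{P}(S)}$ therefore assumes exactly the cross-regime closeness that has to be proved.

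The missing idea is the paper's \emph{bridge-sample} argument, and that is where $\mathbb{P}(S)>\alpha$ actually matters.
\begin{itemize}
\item Alignment is used only inside each cluster. The identity $\mathbb{E}\|X-Y\|_2^2=2\operatorname{Var}(X)$ for i.i.d.\ $X,Y$ gives mean squared distance $\mathcal{L}_{align}(\pi,p)/2$ to the global centroid $\bar\mu(\pi,p)=\mathbb{E}_t[\mu_t(\pi,p)]$.
\item Chebyshev then places all but mass $\alpha/2$ of each cluster inside a ball of radius $\epsilon_{align}=\sqrt{d\,\mathcal{L}_{align}/\alpha}$.
\item The two outlier sets have total mass at most $\alpha<\mathbb{P}(S)$. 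Hence some window $w^*\in S$ lies in both balls.
\item The triangle inequality through $E(w^*)$ gives $\|\bar\mu(\pi,p)-\bar\mu(\tilde\pi,\tilde p)\|_2\le\epsilon_{align}(\pi,p)+\epsilon_{align}(\tilde\pi,\tilde p)$.
\item The passage from $\bar\mu$ to $\mu_t$ uses Jensen and $\mathbb{E}X\le\sqrt{\mathbb{E}X^2}$, at a cost of $\sqrt{\mathcal{L}_{align}/2}$ per cluster.
\end{itemize}
That last step does not go through $\delta_e$. Contrary to your claim, $\delta_e$ is a supremum and cannot be controlled by an averaged loss like $\mathcal{L}_{align}$.

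Your centered-integrand fallback fails for a related reason. The inequality $\|\int(E-c)\,d(\rho-\tilde\rho)\|_2\le 2\sup_w\|E(w)-c\|_2\,D_{TV}(\rho,\tilde\rho)$ is correct. To use it, though, you need a \emph{pointwise} bound on $\|E(w)-c\|_2$ over $S$, with a center $c$ common to both clusters.
\begin{itemize}
\item Alignment yields only Chebyshev-type high-probability statements, so a small-mass part of $S$ can be mapped anywhere.
\item A common center itself requires the bridge argument above.
\item The off-$S$ spread is controlled only by the diameter of the encoder's range. The resulting bound therefore carries an additive term independent of $\mathcal{L}_{align}$.
\end{itemize}
In effect this route recovers only the trivial bound $L_E\le 2\sup_w\|E(w)\|_2$, which is at most $2$ under your unit-sphere assumption.

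Finally, you do not need to resolve the denominator issue to match the paper, because the paper never lower-bounds $D_{TV}$. Its conclusion is $\|\mu_t(\pi,p)-\mu_t(\tilde\pi,\tilde p)\|_2\le 2\sqrt{\mathcal{L}_{align}}\bigl(\sqrt{1/2}+\sqrt{d/\alpha}\bigr)$, divided by a window divergence that is treated as fixed during optimization. The blow-up you identify for near-identical regimes is a real weakness of the statement as posed. The intended proof does not address it.
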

Refer to Appendix~\ref{subsec:app-infonce-min-lipschitz} for a complete proof. Intuitively, the radius of latent cluster produced under dynamics $p$ can be bounded with $\mathcal{L}_{align}$. We then prove condition $\mathbb{P}(S)>\alpha$ yields a ``bridge sample'' among marginally different dynamics (as shown in Fig.~\ref{fig:latent-space-properties}(b)), through which the distance between centroids can be bounded by two times the cluster radius. Although this upper-bound only relates to $\mathcal{L}_{align}$, simply minimizing the Alignment Loss yields an easy local-optimum: mapping all trajectory segments to the same latent, in which case the latent space becomes meaningless. We argue that it is critical to simultaneously enforce both \textit{smoothness} ($\mathcal{L}_{align}$) and \textit{structure} ($\mathcal{L}_{uniform}$), a claim we support via experiments in Subsection~\ref{subsec:alt-regularizer}.
\par From a practical perspective, InfoNCE Loss helps encoder focus on distinguishing trajectories using dynamics factors rather than nuisance variables, as stated in the following theorem. This enhancement is crucial for dynamics-related latent space learning where information distillation from long time windows is typical, causing pure reconstructive methods to fail.
\begin{theorem}\label{thm:infonce-filter-nuisance}
Let the trajectory space $\mathcal{T}$ be locally factorizable into dynamics-relevant features $\mathcal{D}$ and nuisance features $\mathcal{S}$, such that trajectory $\tau \approx (\mu, s)$. Then minimizing the InfoNCE Loss (Eq. \eqref{eq:infonce-decomposition}) implies minimizing the Frobenius norm of $\partial E/\partial s$.
\end{theorem}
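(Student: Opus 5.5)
We work in the local factorization $\tau \approx (\mu, s)$, where $\mu\in\mathcal{D}$ collects the dynamics-relevant coordinates and $s\in\mathcal{S}$ the nuisance coordinates. The argument rests on two modeling choices, and the proof is only as strong as they are. First, positive pairs $(w,w^+)$ are windows generated under the \emph{same} dynamics: they share $\mu$ and differ only in $s$, with $w^+ = (\mu, s+\Delta s)$ and $\Delta s$ a small isotropic perturbation of covariance $\sigma^2 I$ drawn independently of $w$. Second, negatives differ predominantly in $\mu$. With these fixed, the plan is to Taylor-expand the alignment term of Eq.~\eqref{eq:infonce-decomposition} around $w$ and read off $\|\partial E/\partial s\|_F^2$ as its leading term. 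We then check that the uniformity term does not push that quantity back up.

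\textbf{Alignment term.} Expanding the encoder gives
\begin{equation}
E(\mu,s+\Delta s) - E(\mu,s) = J_s(w)\,\Delta s + O(\|\Delta s\|^2),
\end{equation}
with $J_s = \partial E/\partial s$. Taking the expectation over $\Delta s$, the identity $\mathbb{E}[\Delta s^\top J_s^\top J_s \Delta s] = \sigma^2 \operatorname{tr}(J_s^\top J_s)$ yields
\begin{equation}
\mathcal{L}_{align} = \sigma^2\,\mathbb{E}_{w}\big[\|J_s(w)\|_F^2\big] + O(\sigma^3).
\end{equation}
If the nuisance noise is merely bounded rather than isotropic, the same computation gives a sandwich between $\lambda_{\min}(\Sigma)\|J_s\|_F^2$ and $\lambda_{\max}(\Sigma)\|J_s\|_F^2$ up to higher-order terms. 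Either way, minimizing $\mathcal{L}_{align}$ minimizes $\|\partial E/\partial s\|_F$ to leading order.

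\textbf{Uniformity term and gradient orthogonality.} This is the main obstacle, because $\mathcal{L}_{uniform}$ rewards spreading embeddings apart. That pressure could in principle be met by growing sensitivity along $s$ just as well as along $\mu$. To rule this out, I would use the negative-pair structure. For negatives the displacement $w-w^-$ lies predominantly along $\mathcal{D}$, so the gradient of $\mathcal{L}_{uniform}$ with respect to encoder parameters acts through $J_\mu$. In the linearized regime, $E(w)-E(w^-) \approx J_\mu(\mu-\mu^-) + J_s(s-s^-)$, and the $J_s$ contribution carries no systematic sign across samples: averaging over independent nuisance draws makes the cross term vanish.

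The resulting picture is that the gradient of $\mathcal{L}_{uniform}$ is approximately orthogonal to the direction that increases $\|J_s\|_F$. The combined objective therefore decreases $\|J_s\|_F$ through alignment while uniformity spreads embeddings along $J_\mu$. This orthogonality is the step I expect to be hardest to make rigorous. As a concrete first attempt, I would assume the parametrization decouples $J_\mu$ and $J_s$ locally, for instance by taking the Jacobian blocks as independent parameters in the linearized model. Under that assumption, first-order stationarity of $\mathcal{L}_{align}+\mathcal{L}_{uniform}$ in the $J_s$ block reduces to stationarity of $\mathcal{L}_{align}$ alone, which forces $J_s\to 0$. Conceding this decoupling assumption is in keeping with the informal ``implies'' in the statement, and I would state it explicitly in the proof.
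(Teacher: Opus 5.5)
Your alignment computation is essentially the paper's whole proof. The paper also sets $\delta=w^+-w\approx(0,\Delta s)$, Taylor-expands $E(w^+)-E(w)\approx(\partial E/\partial s)\Delta s$, and concludes $\min\mathcal{L}_{align}\Rightarrow\min\|\partial E/\partial s\|_F^2$. Your covariance bookkeeping makes explicit what that last implication silently needs. One correction: the lower half of your sandwich is informative only when $\lambda_{\min}(\Sigma)>0$, i.e.\ when the nuisance varies non-degenerately within positive pairs; boundedness is not enough. The paper never touches $\mathcal{L}_{uniform}$, so your first part alone matches it.

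Your uniformity step, however, is wrong as written. Write $a=J_\mu(\mu-\mu^-)$ and $y=J_s(s-s^-)$, so $\|a+y\|^2=\|a\|^2+2a^\top y+\|y\|^2$. Averaging over independent nuisance draws kills $2a^\top y$ but not $\|y\|^2$, so $\mathcal{L}_{uniform}$ still depends on $J_s$. Declaring the Jacobian blocks independent parameters does not change this. The $J_s$-block stationarity condition is $\nabla_{J_s}\mathcal{L}_{align}=-\nabla_{J_s}\mathcal{L}_{uniform}$, not stationarity of $\mathcal{L}_{align}$ alone.

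Uniformity in fact rewards nuisance sensitivity. With $\Sigma'=\operatorname{Cov}(s-s^-)$, its second-order term in $J_s$ at $J_s=0$ is $\mathbb{E}\big[e^{-\|a\|^2}\big(2a^\top J_s\Sigma'J_s^\top a-\operatorname{tr}(J_s\Sigma'J_s^\top)\big)\big]$. The integrand is negative on every negative pair with $\|a\|^2<1/2$: there $e^{-\|x\|^2}$ is concave, so zero-mean jitter lowers the loss. The mean-zero argument therefore only shows that $J_s=0$ is a stationary point. Whether it is a minimizer is a curvature competition between the two terms, and your proof leaves it unresolved.

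In your linearized model you can settle this globally. For $\phi(x)=e^{-\|x\|^2}$ the Hessian is $e^{-\|x\|^2}(4xx^\top-2I)\succeq-2I$, so $\phi(a+y)\ge\phi(a)+\nabla\phi(a)^\top y-\|y\|^2$. Taking expectations with $\mathbb{E}[y\mid a]=0$ gives
\[
\mathcal{L}_{uniform}(J_s)\ge\mathcal{L}_{uniform}(0)-\operatorname{tr}\big(J_s\Sigma'J_s^\top\big),\qquad \mathcal{L}_{align}(J_s)=\operatorname{tr}\big(J_s\Sigma J_s^\top\big).
\]
Hence $J_s=0$ minimizes $\mathcal{L}_{align}+\mathcal{L}_{uniform}$ for every fixed $J_\mu$ whenever $\Sigma\succeq\Sigma'$.

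This condition holds with equality when nuisance is i.i.d.\ across windows and independent of the dynamics, which is the natural reading of $\rho_{pos}$ versus $\rho_{data}$. It also uses the unit weight and unit kernel width in Eq.~\eqref{eq:infonce-decomposition}. With a weight $c$ on uniformity and kernel $e^{-t\|x\|^2}$, the condition becomes $\Sigma\succeq ct\,\Sigma'$.

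This exposes a tension in your own setup. If positive pairs differ only by a \emph{small} nuisance perturbation while negatives carry the full nuisance spread, the condition fails and the optimum can have $J_s\neq0$. The assumption you should state explicitly is this covariance domination, not block decoupling.
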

Proof is presented in Appendix~\ref{subsec:app-infonce-min-lipschitz}. This result parallels the motivation in vision-based reinforcement learning, where robust representations should remain invariant to task-irrelevant visual variations~\cite{latent-bisimulation-metric,more-robust-bisimulation}. Our theorem here conveys the same principle, but instead of relying on reward-based supervision as in prior work, it shows that task-relevant structure can emerge in a semi-supervised manner through contrastive learning.

\subsection{Practical Algorithm}
We now formalize framework into a tractable learning procedure. Instead of projecting explicit dynamics parameters to latent vectors, we infer a latent probabilistic context $z$ via a variational information bottleneck. We map a history of interactions to this latent space, maximizing the Evidence Lower Bound (ELBO~\cite{vae,beta-vae}) on $p(s_{t+1}|s_t,a_t)$:
\begin{equation}
\begin{aligned}
\mathcal{L}_{\text{VAE}}=&-\mathbb{E}_{q_{\phi}(z_{t}|w_{t})}\left[\log p_{\theta}(s_{t+1}|s_{t},a_{t},z_{t}) \right] \\
&+\beta D_{KL}\left(q_{\phi}(z_t|w_{t})||p(z_t)\right)
\end{aligned}
\label{eq:transition-elbo}
\end{equation}
where $q_{\phi}$ is an amortized inference network mapping history windows $w_t$ to the latent distribution, and $p_{\theta}$ is the generative decoder. We impose an isotropic Gaussian prior $p(z_t)=\mathcal{N}(0, \mathbf{I})$. Let $\mathcal{B}$ denote a minibatch of trajectory segments, define the set of indices $\mathcal{P}(i)$ as the positive set for an anchor segment $i$, containing all segments $j \in \mathcal{B} \setminus \{i\}$ generated under the same dynamics parameters $\mu_i$:
\begin{equation}
\mathcal{P}(i)=\{j\in\mathcal{B}\setminus \{i\} \, \mid \, \mu_j = \mu_i\}
\end{equation}
Multi-Positive InfoNCE Loss~\cite{multi-positive-infonce} is then employed to enforce a small $L_{E}$ (Theorem~\ref{thm:infonce-intra-inter}), as well as ensuring that the learned embedding captures the underlying physical properties rather than nuisance variables:
\begin{equation}
\mathcal{L}_{\text{contrast}}=\frac{1}{|\mathcal{B}|}\sum_{\substack{i \in \mathcal{B} \\ |\mathcal{P}(i)| > 0}}\frac{1}{|\mathcal{P}(i)|}\sum_{j \in \mathcal{P}(i)} \ell_{i,j}
\label{eq:multi-positive-infonce}
\end{equation}
This objective provides three complementary training signals (effect visualized in Fig.~\ref{fig:latent-space-properties}(b)): (1) \textit{Temporal Consistency:} forcing sub-sequences from the same trajectory (e.g., the blue cluster) to map to proximate latent points, which quantitatively minimizes the local consistency metric $\delta_e$; (2) \textit{Manifold Continuity:} utilizing ``bridge samples'' with marginally different dynamics that share partial overlap in behavior to pull similar clusters closer in the embedding space, thus bounding $L_E$ (as established in proof of Theorem~\ref{thm:infonce-intra-inter}); and (3) \textit{Global Structure:} using the uniformity component to ensure that the latent manifold maximally spans the available space (e.g., the blue vs. green cluster), thus forming global structure.
\par The total training objective combines task performance with these representation learning auxiliary losses:
\begin{equation}
\mathcal{L}=\mathcal{L}_{\text{rl}}+\lambda_{1}\mathcal{L}_{\text{VAE}}+\lambda_{2}\mathcal{L}_{\text{contrast}}
\label{eq:total-loss}
\end{equation}
where $\mathcal{L}_{\text{rl}}$ is the reinforcement learning loss. Since policy is conditioned on $z$, $\mathcal{L}_{\text{rl}}$ also flows back to the encoder, encouraging the output of task-relevant latents. Pseudocode for our algorithm is provided in Appendix~\ref{subsec:app-training-algorithm}.
\section{Experiments}
We evaluate our method on four MuJoCo continuous-control benchmarks~\cite{mujoco}: \textit{Hopper}, \textit{Walker2d}, \textit{HalfCheetah}, and \textit{Ant}, covering diverse locomotion challenges (e.g., Hopper's flight phase, planar locomotion for Walker2d/HalfCheetah, and 3D coordination for Ant). To induce dynamics variations, we randomize four physical properties: body mass, joint damping, slide friction, and torque scale. Each is applied as a multiplicative scalar across relevant body parts (e.g., mass scale $0.5$ halves all link masses), yielding an 8--36D randomized dynamics space depending on morphology. Refer to Appendix~\ref{subsec:app-implement-dynamics-randomization} for parameter ranges and per-environment dimensions.

\par Our method is agnostic to reinforcement learning algorithm, but we use Soft Actor-Critic (SAC~\cite{sac}) for its sample and exploration efficiency. We compare the proposed method with five baselines: (1) \textit{SAC+DR}, SAC with dynamics randomization~\cite{dynamics-randomization}; (2) \textit{RMA (Phase 1)}, an oracle that conditions on ground-truth dynamics parameters (i.e., system identification)~\cite{rma}; (3) \textit{RMA (Phase 2)}, employing an encoder to regress the Phase-1 latent from trajectory history~\cite{rma}; (4) \textit{SO-CMA}, CMA-ES~\cite{cma-es} search in the RMA latent space for $z$ maximizing target return~\cite{so-cma}; and (5) \textit{VAE}, our variational ablation trained without $\mathcal{L}_{\text{contrast}}$. The source code of LDG along with baselines are available online\footnote{\href{https://github.com/Mr-Wonderfool/Latent-Dynamics-Geometries}{https://github.com/Mr-Wonderfool/Latent-Dynamics-Geometries}}.

\subsection{In-Distribution Stability}
\begin{figure}[ht]
    \centering
    \includegraphics[width=1.0\linewidth]{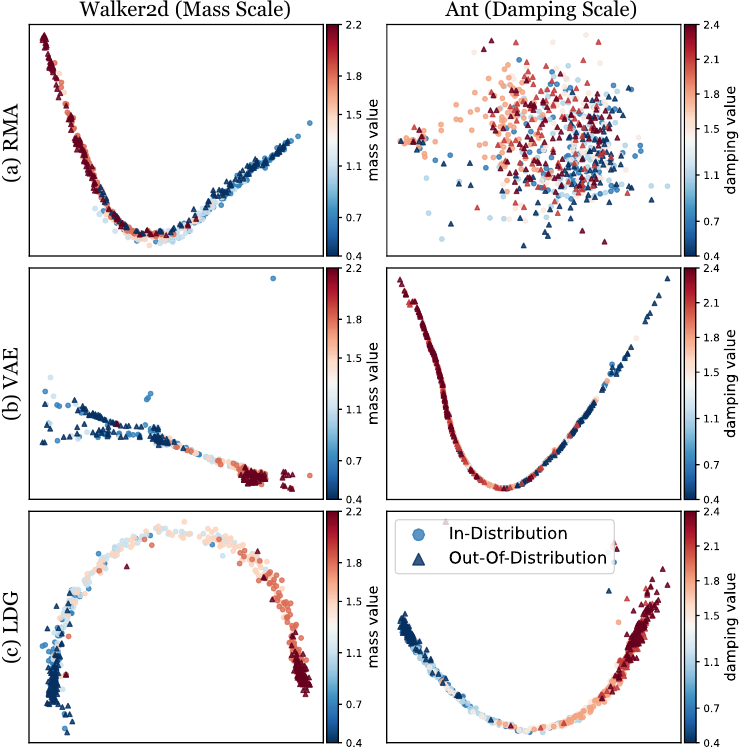}
    \caption{T-SNE visualization of the latent structure. In Walker2d and Ant, mass and damping scale is varied respectively, with range covering both in-distribution data (marked as circles) and out-of-distribution data (marked as triangles). (a) RMA (Phase 2) produces a scattered embedding with no clear ordering and cluster boundary. (b) VAE suffers from mode collapse or topological disjointedness. (c) LDG (Ours) uncovers a smooth, monotonic arc where latent coordinates correlate linearly with physical parameters. This ordered geometry enables the encoder to extrapolate OOD dynamics (triangles) by extending the manifold structure learned from ID data.}
    \label{fig:ood-extrapolation}
\end{figure}

\begin{table*}[t]
    \centering
    \caption{Comparison of asymptotic in-distribution performance. Results report the mean cumulative reward $\pm$ one standard deviation over 5 sets of dynamics parameters. Bold and underline indicate top-1 and top-2 performance within the same access regime (excluding test-time optimization methods).}
    \label{tab:id-performance}
    
    \begin{small}
    \setlength{\tabcolsep}{3.5pt}
    \renewcommand{\arraystretch}{1.2}
    
    \begin{tabular}{l c ccccc}
        \toprule
        & \textbf{Test-time opt.} & \multicolumn{5}{c}{\textbf{Source-only + Zero-shot}} \\
        \cmidrule(lr){2-2} \cmidrule(lr){3-7}
        \textbf{Env} & \textbf{SO-CMA} & \textbf{SAC+DR} & \textbf{RMA (Phase 1)} & \textbf{RMA (Phase 2)} & \textbf{VAE} & \textbf{LDG (Ours)} \\
        \midrule
        Hopper       & $635.2 \pm 489.9$   & $1767.3 \pm 1183.2$ & $1799.6 \pm 1108.0$   & $1677.5 \pm 1168.2$     & $\underline{2029.6 \pm 819.1}$  & $\mathbf{3239.2 \pm 279.3}$ \\
        Walker2d     & $2281.9 \pm 1336.7$ & $3170.2 \pm 162.0$  & $3132.4 \pm 1126.5$  & $\underline{3193.6 \pm 1382.3}$  & $2462.4 \pm 1126.0$ & $\mathbf{4883.7 \pm 381.6}$ \\
        HalfCheetah  & $4066.3 \pm 989.0$ & $4265.9 \pm 825.5$  & $\mathbf{6713.3 \pm 1521.6}$ & $\underline{6635.5 \pm 1433.5}$  & $5127.6 \pm 850.4$  & $5799.1 \pm 1053.4$ \\
        Ant          & $5042.9 \pm 494.3$  & $\underline{4729.8 \pm 226.3}$  & $3817.7 \pm 1512.4$ & $4107.8 \pm 1610.9$  & $2887.9 \pm 534.7$ & $\mathbf{5183.9 \pm 296.9}$ \\
        \bottomrule
    \end{tabular}
    \end{small}
\end{table*}
\par We evaluate in-distribution stability by testing asymptotic performance on 5 sets of dynamics parameters sampled within the training range (Table~\ref{tab:id-performance}). LDG achieves the highest mean reward on Hopper ($3239.2$) and Walker2d ($4883.7$), and ranks second on Ant, close to the test-time oracle SO-CMA (which uses $\approx 42k$ samples per setting). LDG also yields substantially lower variance across seeds (e.g., Walker2d: $\sigma=381.6$ vs. $\sigma=1382.3$ for RMA (Phase 2)), an expected stability consistent with Theorem~\ref{thm:regret-bound} and: contrastive learning upper-bounds $L_E$, reducing term (III) in Eq. \eqref{eq:target-domain-regret-decomp}. In contrast, RMA (Phase 2) can exhibit high-variance when the trajectory-to-parameter mapping is ill-posed (e.g., Hopper's flight phase), and the VAE ablation suffers from latent irregularity; enforcing local consistency (small $\delta_e$ and $L_E$) keeps nearby histories mapped to a compact latent region and provides a stable conditioning signal. LDG underperforms RMA on HalfCheetah; we hypothesize the imposed stability constraint may over-regularize in highly reactive gaits that require large, immediate force commands. We further analyze this problem within ``failure mode analysis'' in subsection~\ref{subsec:exp-zero-shot-generalization}.
\par This stability is reflected in the learned latent geometry. Fig.~\ref{fig:ood-extrapolation} illustrates the embedding space for Walker2d and Ant under controlled variations of a single physical parameter (mass and damping respectively), while keeping others fixed. The baseline methods, particularly RMA (Phase 2), produce scattered and unstructured latents, explaining the high variance in Table~\ref{tab:id-performance}: without a consistent mapping, small noise in the input trajectory can lead to widely different latent codes and destabilize the policy. The VAE ablation also suffers from topological disconnectedness, failing to capture the global continuum of the physical property. In contrast, LDG discovers a smooth, monotonic manifold (an arc) where clusters induced by similar dynamics are adjacent, providing empirical support for our analysis: optimizing InfoNCE keeps similar dynamics close and preserves parameter continuity.
\par While $\mathcal{L}_{align}$ alone can upper-bound the encoder's Lipschitz constant in theory, we find $\mathcal{L}_{uniform}$ necessary in practice to separate clusters and avoid mode collapse, yielding the continuous arc in Fig.~\ref{fig:ood-extrapolation}(c). This ordered geometry also facilitates extrapolation: OOD dynamics (triangles) extend the learned manifold, indicating \textit{perception-level zero-shot generalization} as a prerequisite for control-level zero-shot generalization.

\subsection{Zero-Shot Generalization}\label{subsec:exp-zero-shot-generalization}
\begin{table*}[t]
    \centering
    \caption{Zero-shot generalization performance across three OOD settings: (1) \textit{Unmodeled Parameter} (columns 1-3), where specific physical properties not randomized during training are tested in target domain; (2) \textit{Time-Varying Dynamics} (\textit{Var. Env}), where dynamics parameters shift mildly every 200 time steps; and (3) \textit{Structural Failure} (\textit{Struct. Fail.}), where a joint failure is introduced mid-episode through zero-command enforcement after some delay. Bold and underline indicate top-1 and top-2 performance within the same access regime (excluding test-time optimization methods).}
    \label{tab:ood-generalization}
    
    \setlength{\tabcolsep}{3.5pt}
    \renewcommand{\arraystretch}{1.1}
    \small
    
    \begin{tabular}{l ccccc c ccccc}
        \toprule
        & \multicolumn{5}{c}{\textbf{Hopper (Mass Scale)}} & & \multicolumn{5}{c}{\textbf{Walker2d (Damping Scale)}} \\
        \cmidrule(lr){2-6} \cmidrule(lr){8-12}
        \textbf{Method} & $\mathbf{0.5\times}$ & $1.0\times$ & $\mathbf{2.0\times}$ & \textbf{Var. Env} & \textbf{Struct. Fail.} & & $\mathbf{0.3\times}$ & $1.0\times$ & $\mathbf{2.2\times}$ & \textbf{Var. Env} & \textbf{Struct. Fail.} \\
        \midrule
        \multicolumn{12}{l}{\textbf{Test-time Optimization:}} \\
        SO-CMA              & 1121 & 3260 & 402 & 420 & 432 & & 4214 & 5193 & 271 & 649 & 504 \\
        \midrule
        \multicolumn{12}{l}{\textbf{Source-only + Zero-shot:}} \\
        SAC+DR              & 778 & 3011 & 809 & 2701 & 182 & & \underline{4734} & 4696 & 4651 & 3351 & \underline{739} \\
        RMA (Phase 1)       & 625 & 2545 & \textbf{2747} & 2192 & 149 & & 3667 & 3389 & 4237 & 3121 & 364 \\
        RMA (Phase 2)       & \underline{3092} & \underline{3102} & 563 & \underline{3089} & \underline{394} & & 4117 & 4139 & 4096 & 4062 & 292 \\
        VAE                 & 2810 & 2893 & 711 & 2628 & 255 & & 4733 & \underline{4749} & \underline{4708} & \underline{4673} & 238 \\ \midrule
        \textbf{LDG (Ours)} & \textbf{3154} & \textbf{3434} & \underline{1349} & \textbf{3294} & \textbf{596} & & \textbf{5342} & \textbf{5320} & \textbf{5349} & \textbf{5293} & \textbf{952} \\

        \bottomrule
        \addlinespace[1.5em]
        
        & \multicolumn{5}{c}{\textbf{HalfCheetah (Mass Scale)}} & & \multicolumn{5}{c}{\textbf{Ant (Damping Scale)}} \\
        \cmidrule(lr){2-6} \cmidrule(lr){8-12}
        \textbf{Method} & $\mathbf{0.5\times}$ & $1.0\times$ & $\mathbf{2.0\times}$ & \textbf{Var. Env} & \textbf{Struct. Fail.} & & $\mathbf{0.3\times}$ & $1.0\times$ & $\mathbf{2.2\times}$ & \textbf{Var. Env} & \textbf{Struct. Fail.} \\
        \midrule
        \multicolumn{12}{l}{\textbf{Test-time Optimization:}} \\
        SO-CMA              & 5442 & 9602 & 2965 & 5574 & 2542 & & 4131 & 4059 & 4963 & 4787 & 356 \\
        \midrule
        \multicolumn{12}{l}{\textbf{Source-only + Zero-shot:}} \\
        SAC+DR              & 5375 & 5753 & 3705 & 5769 & 1593 & & 3443 & \underline{3909} & \underline{4228} & 2044 & \textbf{1533} \\
        RMA (Phase 1)       & \underline{6542} & \textbf{9589} & \textbf{5676} & \textbf{9407} & \textbf{2913} & & 2011 & 3182 & 3122 & 2290 & -7 \\
        RMA (Phase 2)       & \textbf{7123} & \underline{9281} & \underline{5615} & \underline{9121} & \underline{2862} & & 1567 & 1033 & 666 & 623 & 72 \\
        VAE                 & 6421 & 6466 & 4566 & 6399 & 2291 & & \underline{3629} & 2062 & 3602 & \underline{3087} & 729 \\ \midrule
        \textbf{LDG (Ours)} & 5999 & 8022 & 5284 & 7849 & 2071 & & \textbf{4797} & \textbf{4804} & \textbf{4789} & \textbf{3462} & \underline{1003} \\
        \bottomrule
    \end{tabular}
\end{table*}
\par We evaluate zero-shot adaptation in three regimes: (1) \textit{Unmodeled Parameter}, where a property (e.g., mass or damping) takes on OOD values at test time (only $1.0\times$ is in-distribution); (2) \textit{Time-Varying Dynamics} (\textit{Var. Env}), where a factor in $[0.9, 1.1)$ rescales dynamics parameters every 200 steps within an episode; and (3) \textit{Structural Failure} (\textit{Struct. Fail.}), where one actuator is disabled (command set to zero) mid-episode after 200 environment steps (allowing methods such as LDG and phase-2 RMA to infer a stable latent representation prior to the structural perturbation), causing a catastrophic shift not representable by continuous dynamics parameters. Results are summarized in Table~\ref{tab:ood-generalization}, where rewards are averaged over 5 random seeds. For structural failure setting, we only record reward after the broken joint event. We train all weights using the randomization ranges in Appendix Table~\ref{tab:app-dynamics-ranges}.

\textbf{Robustness to Structural Shifts.}
Failure of SO-CMA and RMA in the \textit{Struct. Fail.} scenario reveals a critical limitation of explicit identification methods. Since these baselines are trained to regress or search for specific latent vectors that correspond to some dynamics parameters, their latent vocabulary is constrained to the manifold of valid physics simulations. A ``broken joint'' is an OOD event that does not correspond to any valid combination of dynamics parameters, causing SO-CMA to fail catastrophically as it attempts to locate a non-existent latent. In contrast, LDG learns a functional manifold of trajectory dynamics; because the broken joint produces a trajectory pattern similar to extremely high damping and mass on this joint, encoder can project it to a valid, albeit extrapolative, region of the latent space, enabling adaptation.

\textbf{Adaptation to Non-Stationarity.}
In the \textit{Var. Env} setting, RMA (Phase 2) often underperforms LDG (e.g., Ant: $623$ vs. $3462$). When dynamics switch mid-window, the input trajectory contains conflicting physical evidence, leading RMA to regress an average and less representative latent. LDG, however, is trained to enforce local consistency. A window containing a dynamics switch is simply treated as a transition between two latent clusters. The encoder is robust to these intermediate states, allowing the policy to transition smoothly between behavioral modes without being constrained to a single global identification.

\textbf{Failure Mode Analysis.}
Despite these successes, we observe that LDG underperforms baselines on the HalfCheetah environment (for \textit{Var. Env} and \textit{Struct. Fail.}). HalfCheetah requires fast, high-frequency gait cycles where optimal policies are often highly reactive, exerting large, immediate forces. Visualizations of HalfCheetah latent space (Fig.~\ref{fig:damping-discovery}(a)) show that under variations of a single dynamics parameter, latent clusters share little overlapping regions. This indicates that latent distances scale disproportionately with dynamics differences, hindering the formation of ``bridge samples'' that leads to robust adaptation. We hypothesize that in such highly dynamic environments, the Lipschitz smoothness constraint imposed by our contrastive objective may induce an over-regularization effect. By penalizing sharp transitions in the latent space, the method might dampen the aggressive, high-frequency adaptation required for maximum velocity in HalfCheetah, favoring stable locomotion over the explosive reactivity exploited by unconstrained baselines like RMA.

\textbf{Outcome-Centric vs. Parameter-Centric Adaptation.}
RMA Phase 2 (learned adaptation) often yields higher returns compared with RMA Phase 1 (oracle parameters) (e.g., Walker2d: $4930$ vs $3389$). This corroborates our design choice to condition on interaction history rather than explicit parameters. The ``ground truth'' parameters are static labels that do not capture the immediate interactions between the robot and its environment (e.g., a foot slipping). A trajectory-based encoder (used in both RMA Phase 2 and LDG) can react to these instantaneous interaction forces, effectively allowing the agent to adapt to the manifestation of dynamics rather than just their underlying values, as analyzed during method development. This observation is consistent with the student-teacher phenomenon reported in reinforcement learning literatures, where a student policy can surpass the performance of its teacher after distillation or imitation~\cite{student-teacher-theory,student-teacher-rsl}.

\subsection{Structure Discovery Via Functional Equivalence}
\begin{figure}[t]
    \centering
    \includegraphics[width=1.0\linewidth]{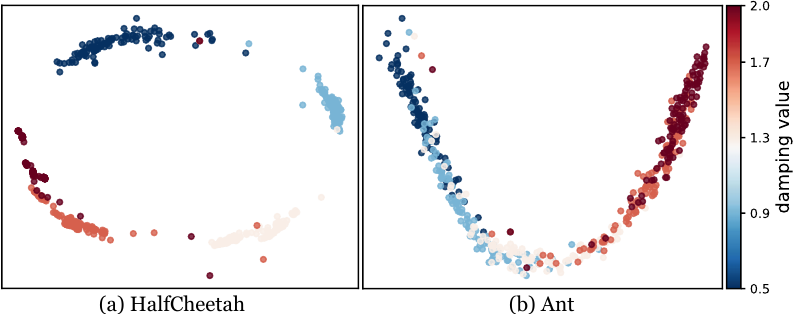}
    \caption{T-SNE visualization of implicit structure discovery. The encoder was trained on environments where joint damping was held fixed, yet it successfully organizes unseen damping variations into a coherent, ordered manifold during testing. This indicates that LDG learns functional properties (i.e., resistance to motion) rather than specific parameter labels, allowing it to generalize to unseen physical properties that induce similar dynamic effects.}
    \label{fig:damping-discovery}
\end{figure}
\par LDG holds the ability to discover the structure of physical parameters even when they are not explicitly randomized during training. As shown in Fig.~\ref{fig:damping-discovery}, when training an agent on randomized mass but keeping joint damping fixed, the learned encoder still organizes damping values into a coherent, ordered manifold during testing. This can be attributed to \textit{functional equivalence}: high joint damping and increased mass all manifest as ``resistance to motion'', leading to the discovery of ``resistance'' manifold through LDG optimization. When LDG encounters unseen damping variations, the encoder projects them onto this pre-existing manifold based on their functional effect. This explains the robustness observed in the OOD experiments: the method does not need to identify the exact physical label (e.g., ``broken joint''), but rather maps the resulting trajectory dynamics to a known functional coordinate (e.g., ``extreme resistance''). However, this transfer is not universal; it relies on the training distribution containing sufficient functional variance. Empirically we find if the training parameters (e.g., high damping) exert a significantly weaker influence on dynamics than the unseen test parameter (e.g., high mass), the learned manifold is too compressed to represent the new, larger variations.

\subsection{Alternative Regularizer}\label{subsec:alt-regularizer}
\begin{table*}[t]
    \centering
    \caption{Performance comparison of Contrastive Learning (CL) versus Spectral Normalization (SN). Rewards are averaged over 5 random seeds.}
    \label{tab:alt-regularizer}
    \renewcommand{\arraystretch}{1.2}
    \begin{tabular}{llccccc}
        \toprule
        \multirow{2}{*}{\textbf{Env}} & \multirow{2}{*}{\textbf{Method}} & \multirow{2}{*}{\textbf{ID}} & \multicolumn{2}{c}{\textbf{OOD Damping}} & \multirow{2}{*}{\textbf{Var. Env}} & \multirow{2}{*}{\textbf{Struct. Fail}} \\
        \cmidrule(lr){4-5}
        & & & $\mathbf{0.3 \times}$ & $\mathbf{2.2 \times}$ & & \\
        \midrule
        \multirow{2}{*}{\textbf{Walker2d}} 
        & SN        & 4384 & 4646 & 4677 & 4661 & 335 \\
        & CL (Ours) & \textbf{4884} & \textbf{5342} & \textbf{5349} & \textbf{5293} & \textbf{952} \\
        \midrule
        \multirow{2}{*}{\textbf{Ant}}      
        & SN        & \textbf{5538} & 3329 & 2612 & 2969 & 248 \\
        & CL (Ours) & 5184 & \textbf{4797} & \textbf{4789} & \textbf{3462} & \textbf{1003} \\
        \bottomrule
    \end{tabular}
\end{table*}
Theorem~\ref{thm:regret-bound} establishes that reducing the encoder's Lipschitz constant, $L_E$, minimizes target domain regret. While we enforce this geometric smoothness via contrastive learning, a natural alternative is Spectral Normalization~\cite{spectral-norm-for-dl,spectral-norm-for-gan}, a technique widely proven to improve the robustness of generative models. By dividing each weight matrix by its spectral norm, SN explicitly bounds the Lipschitz constant of each layer, thereby strictly upper-bounding $L_E$ by 1.
\par To isolate the structural benefits of contrastive learning, we ablate our objective by replacing it with SN applied directly to the encoder layers. We evaluate the resulting policies on the Walker2d and Ant environments across In-Distribution (ID), Out-Of-Distribution (OOD) damping, time-varying (Var. Env), and structural failure (Struct. Fail) scenarios. The results are summarized in Table~\ref{tab:alt-regularizer}, rewards are averaged over 5 random seeds. While SN achieves strong ID performance on Ant environment---suggesting that explicit spectral constraints successfully minimize $L_E$ to ensure local stability---it struggles significantly in OOD and structural failure scenarios across both domains. We attribute this degradation to SN's inability to enforce a meaningful latent topological structure. Although SN mathematically ensures encoder smoothness, it does not organize the latent manifold based on physical semantics. In contrast, contrastive learning guarantees this organized geometry through the uniformity loss term in the InfoNCE decomposition, a property that proves critical for the policy to successfully extrapolate to unseen physical variations in zero-shot settings. This ablation result indicates that advantage of contrastive learning stems from both smoothness ($\mathcal{L}_{align}$) and structure ($\mathcal{L}_{uniform}$), though theoretically only $\mathcal{L}_{align}$ is needed to upper-bound $L_{E}$.

\subsection{Hyperparameter Sensitivity}
\begin{figure}[ht]
    \centering
    \includegraphics[width=1.0\linewidth]{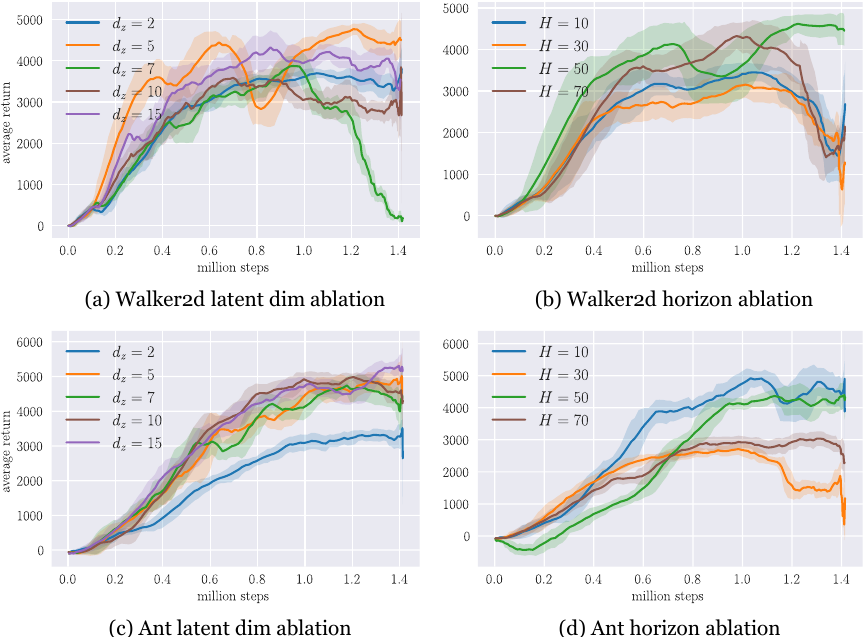}
    \caption{Training reward curves on Walker2d and Ant environments. Subfigures (a) and (c) illustrate the sensitivity to the latent dimension ($d_z$), highlighting the tradeoff between representational bottlenecking and manifold sparsity. Subfigures (b) and (d) demonstrate the effect of trajectory horizon length ($H$), balancing temporal context accumulation against immediate control reactivity. Solid lines represent the mean episode return, and shaded regions denote the variance across environments.}
    \label{fig:hyperparameter-ablation}
\end{figure}
Overall, LDG is robust to hyperparameters and does not require aggressive tuning. We found LDG only sensitive to reward scale (tuned to balance $\mathcal{L}_{\text{rl}}$ in Eq.~\eqref{eq:total-loss}), VAE encoder $\beta$ (in Eq.~\eqref{eq:transition-elbo}), encoder horizon $H$ and dimension of latent vectors $d_{z}$. We provide tuning heuristics for reward scale and $\beta$ in Appendix~\ref{sec:app-hyperparameter-tuning-and-ablation}, and perform an ablation in Fig.~\ref{fig:hyperparameter-ablation} to guide the selection of $H$ and $d_{z}$.

\par \textbf{Latent Dimension Capacity ($d_z$).} As shown in Fig.~\ref{fig:hyperparameter-ablation}(a) and (c), we observe a clear trade-off between information bottlenecking and optimization complexity. Empirically, the minimum effective ratio between latent and observation dimensions is approximately $3:16$; below which the latent code is largely ignored relative to the observations, as evidenced by the failure of $d_z=2$ in Walker2d and Ant. Conversely, optimization over high-dimensional Gaussian latent spaces is known to be challenging, and excessive latent capacity ($d_z \in \{7,10,15\}$) introduces sparsity that destabilizes training in Walker2d. In contrast, the higher-dimensional Ant environment remains relatively insensitive to latent dimensions larger than $5$, yielding comparable returns across settings.

\par \textbf{Temporal Context Horizon ($H$).} The choice of horizon length must balance contextual information accumulation against control reactivity. For a fair comparison, trajectory windows are compressed to the same output feature dimension by adjusting the stride and kernel size of the convolutional encoder accordingly. Extremely short horizons (e.g., $H=10$) provide insufficient historical context for stable latent inference, leading to suboptimal or highly volatile training dynamics. In contrast, excessively long horizons ($H=70$) dilute the importance of recent transitions. Across both environments, $H=50$ consistently yields the best performance. Additional analysis on responsiveness under dynamics shifts for different context horizon is provided in Appendix~\ref{sec:app-hyperparameter-tuning-and-ablation}.
\section{Conclusion}
We introduce Latent Dynamics Geometries (LDG), a framework for robust zero-shot policy adaptation that moves beyond RMA-style system identification by learning a geometrically structured latent dynamics manifold. We link target-domain regret to the Lipschitz smoothness of a trajectory encoder, and instantiate this idea with a practical algorithm that uses contrastive learning to control representation sensitivity. Empirically, LDG improves in-distribution stability and achieves strong zero-shot generalization across diverse dynamics shifts, often matching or surpassing parameter-centric (RMA) and unstructured variational (VAE) baselines, while producing a more ordered latent geometry. Our results further suggest that LDG can exploit functional equivalences among physical factors to handle certain unmodeled events by mapping them to meaningful regions of the learned manifold.
\par Empirical validations also reveal a fundamental trade-off: enforcing smoothness for robust adaptation may over-regularize highly reactive tasks such as HalfCheetah. Future work will investigate mechanisms for adaptively relaxing smoothness constraints when necessary. In addition, we aim to move toward a fully unsupervised setting that removes the need to explicitly determine whether dynamics parameters are shared across samples, and to extend LDG to more complex domains, including interactive tasks.
\section*{Acknowledgements}
We would like to thank Zihan Xu for insightful discussions and Haodong Yang for helping setup the initial codebase. This work was supported by the National Natural Science Foundation of
China under Grants 62403358, 62333017 and 62233013.
\section*{Impact Statement}
This paper presents work whose goal is to advance the field of Machine Learning. There are many potential societal consequences of our work, none of which we feel must be specifically highlighted here.

\bibliographystyle{ref/icml2026.bst}
\bibliography{ref/ref}

\newpage
\appendix
\onecolumn
\setcounter{theorem}{0}
\setcounter{assumption}{0}
\setcounter{lemma}{0}
\setcounter{definition}{0}
\setcounter{equation}{0}
\setcounter{table}{0}
\setcounter{figure}{0}

\counterwithin{theorem}{section}
\counterwithin{assumption}{section}
\counterwithin{lemma}{section}
\counterwithin{definition}{section}
\counterwithin{equation}{section}
\counterwithin{table}{section}
\counterwithin{figure}{section}

\section{Proof of Lemmas And Theorems}

\subsection{Notations And Definitions}\label{subsec:app-definitions}
Throughout this section, we use subscript $T$ to denote quantities in the target domain, and use subscript $S$ to denote quantities in the source domain, such that $\pi _S$ denotes an adaptive policy in the source domain, and $V^{\pi _S}_S$ denotes the value function achieved by rolling out $\pi _S$ in the source domain. For notational convenience we assume the state space is discrete; it is straightforward to extend all of our proofs to continuous domains by replacing summations with integrals.
\subsubsection{Dynamics In MDPs}\label{subsubsec:app-dynamics-mdps}
We consider a family of Markov Decision Processes parameterized by physical parameters
$\theta \in \Theta$:
\begin{equation}
\mathcal{M}(\theta) := (\mathcal{S}, \mathcal{A}, p_\theta, r, \gamma)
\end{equation}
where the state space $\mathcal{S}$, action space $\mathcal{A}$, reward function $r$, and discount factor $\gamma$ are shared across domains, while the transition dynamics $p_\theta(s' | s, a)$ depend on the underlying system parameters $\theta$ (e.g., mass distribution, actuation characteristics, or contact properties). We make the following assumptions regarding dynamics parameters and their induced transition functions.
\begin{assumption}[Dynamics Continuity in Parameters]\label{assump:app-dynamics-continuity}
There exists a constant $L_p > 0$ such that for any $\theta, \theta' \in \Theta$ and $\forall (s,a)$, $\| p_\theta(\cdot | s,a) - p_{\theta'}(\cdot | s,a) \|_1\le L_p \, \| \theta - \theta' \|$.
\end{assumption}

\begin{definition}[Task-Relevant Dynamics Equivalence]
Let $\theta \in \Theta$ denote the underlying physical parameters of a system.
We define a task-relevant dynamics representation $\eta \coloneqq \psi(\theta) \in \mathcal{H}$, where $\psi$ maps system parameters to a low-dimensional space capturing task-relevant interaction properties. Two parameter settings $\theta$ and $\theta'$ are said to be \emph{dynamics-equivalent} if $\psi(\theta) = \psi(\theta')$. We denote the corresponding equivalence class by
\begin{equation}
[\theta] := \{ \theta' \in \Theta \mid \psi(\theta') = \psi(\theta) \}
\end{equation}
\end{definition}

\begin{assumption}[Equivalence-Class-Induced Dynamics]
The transition dynamics depend on $\theta$ only through its task-relevant representation
$\eta = \psi(\theta)$, i.e., $p_\theta(s'|s,a) \approx p_{\eta}(s'|s,a)$. Consequently, dynamics parameters within the same equivalence class $[\theta]$ induce approximately identical trajectory distributions for the task of interest.
\end{assumption}
Therefore, the qualitative claim of ``different set of dynamics parameters can induce similar dynamics'' essentially means equivalence class $[\theta]\neq\emptyset$, and our algorithm learns function $\psi$ to identify equivalence classes.

\subsubsection{Window Distribution}\label{subsubsec:app-window-distribution}
\par For a window segment $w_{t}=(s_{t-H},a_{t-H},\ldots,s_{t-1},a_{t-1})$ of one full trajectory $\tau=(s_0,a_0,\ldots,s_{L-1},a_{L-1},s_{L})$, its marginal density $\rho (\pi, p)$ can be obtained by rolling out the Markov chain:
\begin{equation}
\rho_{t}(\pi,p)\coloneqq\rho(\pi,p)(w_t)= d^{\pi,p}_{t-H}(s_{t-H})\cdot \prod_{k=t-H}^{t-1}\pi (a_k|s_k,E(w_k))\cdot p(s_{k+1}|s_k,a_k)
\label{eq:app-window-dist-marginal}
\end{equation}
where $d^{\pi,p}_{t-H}(s_{t-H})$ is the probability density of being in state $s_{t-H}$ at time step $t-H$ under policy $\pi$ and dynamics $p$, defined recursively as:
\begin{equation}
d^{\pi,p}_k(s_{k}) = \int_{\mathcal{S}} \int_{\mathcal{A}} p(s_k | s_{k-1}, a_{k-1}) \pi(a_{k-1} | s_{k-1},E(w_{k-1}))d^{\pi,p}_{k-1}(s_{k-1})\, da_{k-1} \,ds_{k-1}
\label{eq:app-state-marginal-def}
\end{equation}
In subsequent analysis, we will write $E(w_{k})$ as $z_{k}$ for brevity.

\subsubsection{Definitions}
\par The following definitions are established in the main thesis, we restate it here to aid the proof of theorems.
\begin{definition}\label{def:app-dynamics-close-mdps}
Two MDPs $\mathcal{M}_S$ and $\mathcal{M}_T$ are $\epsilon_p$-close in dynamics, if they share the same state space, action space and reward function, but differ in transition dynamics $p_{S}(s'|s,a)$ and $p_{T}(s'|s,a)$, such that:
\begin{equation}
D_{TV}(p_{T}(\cdot|s,a),p_{S}(\cdot|s,a)) \leq \epsilon_p \quad \forall (s,a)
\end{equation}
\end{definition}

\begin{definition}\label{def:app-policy-smoothness}
A latent-conditioned policy $\pi(\cdot|s,z)$ is $L_{\pi}$-smooth, if there exists a constant $L_{\pi}$ such that:
\begin{equation}
\sup_{s} D_{TV}(\pi(\cdot|s, z_{S}), \pi(\cdot|s, z_{T})) \leq L_{\pi} \|z_{S} - z_{T}\|_2
\end{equation}
\end{definition}

\begin{definition}\label{def:app-latent-centroid}
For a fixed policy $\pi$ and transition dynamics $p$, the time-indexed latent centroid $\mu_{t}(\pi,p)$ is the expected encoding of the window at that specific time step:
\begin{equation}
\mu_{t}(\pi,p)=\mathbb{E}_{w\sim\rho_{t}(\pi,p)}[E(w)]
\end{equation}
\end{definition}

\begin{definition}\label{def:app-temporal-consistency}
An encoder is $\delta_{e}$-consistent if:
\begin{equation}
\sup _t\sup _{w \sim \rho_t(\pi, p)} \| E(w) - \mu_t(\pi, p) \|_2\leq \delta _e
\end{equation}
\end{definition}

\begin{definition}\label{def:app-encoder-smoothness}
The encoder Lipschitz constant $L_E$ with respect to the total variation divergence of the induced window distributions is defined as:
\begin{equation}
L_{E}=\sup_{t}\sup_{(\pi,p),(\tilde{\pi},\tilde{p})}\frac{\left\lVert \mu_{t}(\pi,p)-\mu_{t}(\tilde{\pi},\tilde{p})\right\rVert _2}{D_{TV}(\rho_{t}(\pi,p),\rho_{t}(\tilde{\pi},\tilde{p}))}
\end{equation}
The encoder is considered $L_E$-smooth if $L_{E}<\infty$.
\end{definition}

\begin{definition}\label{def:app-dynamics-support-set}
Let $\mathcal{W}$ be the space of trajectory windows, the support of physically feasible windows under dynamics $p$ and policy $\pi$ is defined as:
\begin{equation}
\operatorname{supp}(\pi,p)=\{w\in\mathcal{W}\mid\forall (s,a,s')\in w, p(s'|s,a)>0\}
\end{equation}
\end{definition}

\subsection{Simulation Lemma}
Below we present the Simulation Lemma~\cite{simulation-lemma}, which bounds the difference in cumulative reward achieved by the same (nonadaptive) policy under different domains.
\begin{lemma}[Simulation Lemma~\cite{simulation-lemma}]\label{lemma:app-simulation-lemma}
Let $\mathcal{M}_S$ and $\mathcal{M}_T$ be two MDPs that are $\epsilon _p$-close in dynamics, then for policy $\pi(\cdot|s)$, the following bound holds:
\begin{equation}
\left\lVert V^{\pi}_{T} - V^{\pi}_{S} \right\rVert _{\infty} \leq \frac{2\gamma R_{\max}}{(1-\gamma)^2}\,\epsilon_p
\label{eq:app-simulation-lemma}
\end{equation}
\end{lemma}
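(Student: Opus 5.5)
The plan is the standard performance-difference / telescoping argument for a fixed, non-adaptive policy $\pi(\cdot|s)$. The same quantity $\pi$ is evaluated under two dynamics $p_S$ and $p_T$ that are $\epsilon_p$-close in the sense of Definition~\ref{def:app-dynamics-close-mdps}. Rewards are assumed bounded, $|r|\le R_{\max}$, so $\|V^{\pi}_S\|_\infty,\|V^{\pi}_T\|_\infty\le R_{\max}/(1-\gamma)$.

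\textbf{Step 1 (Bellman recursion).} For any state $s$, write the Bellman equations
\[
V^\pi_T(s)=\mathbb{E}_{a\sim\pi}\bigl[r(s,a)+\gamma\,\mathbb{E}_{s'\sim p_T}V^\pi_T(s')\bigr]
\]
and likewise for $S$. Subtracting, the reward terms cancel because $r$ is shared. Adding and subtracting $\gamma\,\mathbb{E}_{s'\sim p_T}V^\pi_S(s')$ gives
\[
V^\pi_T(s)-V^\pi_S(s)=\gamma\,\mathbb{E}_{a\sim\pi}\Bigl[\mathbb{E}_{p_T}\bigl(V^\pi_T-V^\pi_S\bigr)(s')+\bigl(\mathbb{E}_{p_T}-\mathbb{E}_{p_S}\bigr)V^\pi_S(s')\Bigr].
\]

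\textbf{Step 2 (bound the dynamics-mismatch term).} Use the inequality $|\mathbb{E}_P f-\mathbb{E}_Q f|\le \|P-Q\|_1\|f\|_\infty = 2D_{TV}(P,Q)\|f\|_\infty$. With $\|V^\pi_S\|_\infty\le R_{\max}/(1-\gamma)$, the second term is at most $2\epsilon_p R_{\max}/(1-\gamma)$.

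\textbf{Step 3 (close the recursion).} Take the supremum over $s$, writing $\Delta=\|V^\pi_T-V^\pi_S\|_\infty$. This yields
\[
\Delta\le\gamma\Delta+\frac{2\gamma\epsilon_p R_{\max}}{1-\gamma}.
\]
Rearranging gives exactly $\Delta\le 2\gamma R_{\max}\epsilon_p/(1-\gamma)^2$, which is Eq.~\eqref{eq:app-simulation-lemma}.

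\textbf{Main obstacle.} The only delicate point is the convention for total variation. Under the $D_{TV}=\tfrac12\|\cdot\|_1$ convention, Step 2 gives the factor $2$ as stated. If one instead exploits that $V$ can be centered (since $\mathbb{E}_P c-\mathbb{E}_Q c=0$ for constants), the bound tightens to the span, but the stated constant does not require this.

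The lemma applies only to a non-adaptive $\pi(\cdot|s)$, so no encoder terms arise here. Those enter later, in term (III) of Eq.~\eqref{eq:target-domain-regret-decomp}.
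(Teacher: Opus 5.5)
Your proof is correct and follows the paper's argument essentially step for step: the Bellman recursion with $\gamma\,\mathbb{E}_{p_T}V^\pi_S$ added and subtracted, the H\"older/total-variation bound $2\epsilon_p R_{\max}/(1-\gamma)$ on the dynamics-mismatch term, and the sup-norm contraction rearranged to the stated constant. Keeping the action expectation outside, instead of marginalizing to $p(s'|s)$ as the paper does, changes only the presentation; you also carry the factor $2$ through consistently, whereas the paper's intermediate inequality drops it and the stated bound restores it.
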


\begin{proof}
In subsequent proof we drop the explicit dependency on action in the transition function for brevity, to write $p_{T}(s'|s, \pi(a|s))$ as $p_{T}(s'|s)$. Recall the Bellman equation for a fixed policy $\pi$:
\begin{equation}
V^{\pi}_{T}(s) = R(s) + \gamma \sum_{s'} p_{T}(s'|s) V^{\pi}_{T}(s')
\end{equation}
Denote the value difference $V^{\pi}_{T}(s) - V^{\pi}_{S}(s)$ as $\Delta (s)$, which can be expanded as:
\begin{equation}
\begin{aligned}
\Delta (s) &=\gamma \left( \sum_{s'} p_{T}(s'|s) V^{\pi}_{T}(s') - \sum_{s'} p_{S}(s'|s) V^{\pi}_{S}(s') \right) \\
&= \gamma \left( \sum_{s'} p_{T}(s'|s) V^{\pi}_{T}(s') - \sum_{s'} p_{T}(s'|s) V^{\pi}_{S}(s') + \sum_{s'} p_{T}(s'|s) V^{\pi}_{S}(s') - \sum_{s'} p_{S}(s'|s) V^{\pi}_{S}(s') \right) \\
&= \gamma \underbrace{\sum_{s'} p_{T}(s'|s) \Delta (s')}_{\text{Value Error Propagation}} + \gamma \underbrace{\sum_{s'} (p_{T}(s'|s) - p_{S}(s'|s)) V^{\pi}_{S}(s')}_{\text{Dynamics Error}}
\end{aligned}
\label{eq:app-value-difference}
\end{equation}
The Dynamics Error term can be bounded by H\"{o}lder's inequality:
\begin{equation}
\left| \sum_{s'} (p_{T}(s'|s) - p_{S}(s'|s)) V^{\pi}_{S}(s') \right| \leq \|p_{T}(\cdot|s) - p_{S}(\cdot|s)\|_1 \|V^{\pi}_{S}\|_\infty \leq 2\epsilon_p \frac{R_{\max}}{1-\gamma}
\end{equation}
Substituting this back into Eq. \eqref{eq:app-value-difference}, taking the absolute value and utilizing triangle inequality:
\begin{equation}
\left|\Delta (s)\right| \leq \gamma\left|\sum_{s'} p_{T}(s'|s) \Delta(s') \right| + \frac{\gamma R_{\max}}{1-\gamma}\epsilon _p\leq \gamma\max _{s'}\left|\Delta (s')\right| + \frac{\gamma R_{\max}}{1-\gamma}\epsilon _p
\label{eq:app-bounding-value-difference}
\end{equation}
We further take the max operator on both sides of Eq. \eqref{eq:app-bounding-value-difference}, which leads to:
\begin{equation}
\max \left|\Delta (s)\right|=\left\lVert \Delta (s) \right\rVert _{\infty}\leq \gamma \left\lVert \Delta (s)\right\rVert _{\infty} + \frac{\gamma R_{\max}}{1-\gamma}\epsilon _p
\end{equation}
Rearranging terms, we obtain the bound in Eq. \eqref{eq:app-simulation-lemma}.
\end{proof}

\subsection{Extended Performance Difference Lemma}
Standard Performance Difference Lemma (PDL~\cite{pdl-origin}) focuses on value function difference induced by different policies within the same domain. Since different adaptive policies naturally require different domains, we extend PDL to bound $V_{T}^{\pi_T}-V_{S}^{\pi _S}$.
\begin{lemma}[Extended Performance Difference Lemma]\label{lemma:app-extended-pdl}
For policy and dynamics pairs $(\pi_T,p_T)$ in target domain and $(\pi _S,p_S)$ in source domain, starting from the same initial state $s_0$, policy performance difference is bounded by:
\begin{equation}
|V_T^{\pi_T}(s_0) - V_S^{\pi_S}(s_0)| \leq 2 R_{max} \sum_{t=0}^{\infty} \gamma^t \left( D_{TV}(d_t^{\pi_S, p_S}, d_t^{\pi_T, p_T}) + \mathbb{E}_{s_t \sim d_t^{\pi_S, p_S}} \left[ D_{TV}(\pi_S(\cdot|s_t, z_{S,t}), \pi_T(\cdot|s_t, z_{T,t})) \right] \right)
\end{equation}
where $d^{\pi,p}_t$ denotes the state marginal at time $t$, induced by policy $\pi$ and dynamics $p$, defined in Eq. \eqref{eq:app-state-marginal-def}.
\end{lemma}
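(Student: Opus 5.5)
The plan is to compare the two value functions through their expansions as discounted sums of expected rewards, and to bound the gap at each time step by a total-variation distance between the induced state--action marginals. Write $V^{\pi}_p(s_0)=\sum_{t\ge 0}\gamma^t\,\mathbb{E}_{(s_t,a_t)\sim q_t^{\pi,p}}[r(s_t,a_t)]$, where $q_t^{\pi,p}(s,a)=d_t^{\pi,p}(s)\,\pi(a|s,z_t)$ is the joint marginal at time $t$. Because $|r|\le R_{\max}$, for any two distributions $q,q'$ we have $|\mathbb{E}_q r-\mathbb{E}_{q'} r|\le \|q-q'\|_1 R_{\max}=2R_{\max}D_{TV}(q,q')$. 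This gives
\begin{equation*}
|V_T^{\pi_T}(s_0)-V_S^{\pi_S}(s_0)|\le 2R_{\max}\sum_{t\ge0}\gamma^t D_{TV}\big(q_t^{\pi_S,p_S},q_t^{\pi_T,p_T}\big),
\end{equation*}
and it remains to bound each per-step joint TV distance.

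For a fixed $t$, split $q_t^{\pi_S,p_S}-q_t^{\pi_T,p_T}$ by adding and subtracting the hybrid $d_t^{\pi_S,p_S}(s)\,\pi_T(a|s,z_{T,t})$. The first difference is $d_t^{\pi_S,p_S}(s)\,[\pi_S(a|s,z_{S,t})-\pi_T(a|s,z_{T,t})]$. Summing its absolute value over $a$ and then $s$ gives $2\,\mathbb{E}_{s_t\sim d_t^{\pi_S,p_S}}[D_{TV}(\pi_S(\cdot|s_t,z_{S,t}),\pi_T(\cdot|s_t,z_{T,t}))]$. The second difference is $[d_t^{\pi_S,p_S}(s)-d_t^{\pi_T,p_T}(s)]\,\pi_T(a|s,z_{T,t})$. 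Summing over $a$ removes the policy factor, leaving $2D_{TV}(d_t^{\pi_S,p_S},d_t^{\pi_T,p_T})$. Halving both to recover TV and applying the triangle inequality yields
\begin{equation*}
D_{TV}(q_t^{S},q_t^{T})\le D_{TV}(d_t^{\pi_S,p_S},d_t^{\pi_T,p_T})+\mathbb{E}_{s_t\sim d_t^{\pi_S,p_S}}\big[D_{TV}(\pi_S(\cdot|s_t,z_{S,t}),\pi_T(\cdot|s_t,z_{T,t}))\big].
\end{equation*}
Substituting this into the discounted sum gives exactly the stated bound. The sum converges since each summand is at most $2$ and $\gamma<1$.

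The main obstacle is that the policies are latent-conditioned. Here $z_t=E(w_t)$ depends on the history, so $\pi(a|s,z_t)$ is not a Markov policy in $s$, and the joint $q_t$ is really a marginal over windows. I would handle this by fixing the latent sequences $z_{S,t}$ and $z_{T,t}$ along the rollout, as the statement implicitly does, and treating $d_t$ as in Eq.~\eqref{eq:app-state-marginal-def}. If a fully rigorous version is needed, the same add-and-subtract argument can be carried out on the joint law of $(w_t,a_t)$, since the reward depends only on $(s_t,a_t)$. In that case the expectation over $s_t$ becomes an expectation over windows, and the stated form is recovered by reading $z_{S,t}$ and $z_{T,t}$ as the corresponding encodings.

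Unlike the simulation lemma, no Bellman recursion is needed. The dynamics shift is never bounded directly; it enters only through the state-marginal term $D_{TV}(d_t^{\pi_S,p_S},d_t^{\pi_T,p_T})$, which Assumption~\ref{assump:initialization-shift-bound} later controls.
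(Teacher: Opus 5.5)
Your proposal is correct and matches the paper's proof step for step. Both expand the two values as discounted sums over the joint marginals $d_t^{\pi,p}(s)\,\pi(a|s,z_t)$, bound the gap by $2R_{\max}\sum_t\gamma^t D_{TV}$ of those joints, and split each joint TV through the hybrid $d_t^{\pi_S,p_S}\pi_T$ into the policy term and the state-marginal term, with the latents held fixed along each rollout. One caution concerns only your aside: redoing the argument on the joint law of $(w_t,a_t)$ would not return the stated form, since both policies then see the same encoding $E(w_t)$ and the state term becomes a window-level TV; writing the true joint as $d_t(s)\,\mathbb{E}[\pi(a|s,z_t)\mid s_t=s]$ and using joint convexity of $D_{TV}$ does recover it, with $z_{S,t},z_{T,t}$ averaged over their conditional laws given $s_t$.
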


\begin{proof}
By definition, the expected return of a policy from an initial state $s_0$ is the sum of expected rewards over the joint state-action marginals at each time step $t$:
\begin{equation}
\begin{aligned}
V_S^{\pi_S}(s_0) &= \sum_{t=0}^{\infty} \gamma^t \int_{\mathcal{S}} \int_{\mathcal{A}} \xi_{S, t}(s_t, a_t) R(s_t, a_t) \,da_t \,ds_t \\
V_T^{\pi_T}(s_0) &= \sum_{t=0}^{\infty} \gamma^t \int_{\mathcal{S}} \int_{\mathcal{A}} \xi_{T, t}(s_t, a_t) R(s_t, a_t) \,da_t \,ds_t
\end{aligned}
\end{equation}
where $\xi _{S,t}(s_t, a_t)=d_t^{\pi_S, p_S}(s_t) \pi_S(a_t|s_t, z_{S,t})$. Evaluate the absolute value of performance difference and apply the triangle inequality:
\begin{equation}
\begin{aligned}
|V_S^{\pi_S}(s_0) - V_T^{\pi_T}(s_0)|&= \left| \sum_{t=0}^{\infty} \gamma^t \int_{\mathcal{S}} \int_{\mathcal{A}} (\xi_{S, t}(s_t, a_t) - \xi_{T, t}(s_t, a_t)) R(s_t, a_t) \,da_t \,ds_t \right| \\
&\leq 2 R_{max} \sum_{t=0}^{\infty} \gamma^t D_{TV}(\xi_{S, t}(s_t, a_t), \xi_{T, t}(s_t, a_t))
\end{aligned}
\label{eq:app-value-diff-tv-joint-marginal}
\end{equation}
Apply the triangle inequality again, we split the Total Variation divergence into two integrals:
\begin{equation}
\begin{aligned}
D_{TV}(\xi_{S,t}(s_t, a_t), \xi_{T,t}(s_t, a_t))&\leq \frac{1}{2} \int_{\mathcal{S}} \int_{\mathcal{A}} \left| d_t^{\pi_S, p_S}(s_t) \right| \left| \pi_S(a_t|s_t, z_{S,t}) - \pi_T(a_t|s_t, z_{T,t}) \right| \,da_t \,ds_t \\
& +\frac{1}{2} \int_{\mathcal{S}} \int_{\mathcal{A}} \left| \pi_T(a_t|s_t, z_{T,t}) \right| \left| d_t^{\pi_S, p_S}(s_t) - d_t^{\pi_T, p_T}(s_t) \right| \,da_t \,ds_t \\
&=\mathbb{E}_{s_t \sim d_t^{\pi_S, p_S}} \left[ D_{TV}(\pi_S(\cdot|s_t, z_{S,t}), \pi_T(\cdot|s_t, z_{T,t})) \right]+D_{TV}(d_t^{\pi_S, p_S}, d_t^{\pi_T, p_T})
\end{aligned}
\end{equation}
Substitute this bound into Eq. \eqref{eq:app-value-diff-tv-joint-marginal}, we arrive at the desired result.
\end{proof}

\subsection{Trajectory Divergence Lemmas}
Implicit methods use an encoder to map trajectory segments to latent vectors. To quantify the Lipschitz continuity for encoder, we first establish the following lemma on divergence between two trajectories, and further extend it to divergence between two segments from different trajectories with the same initial state.
\begin{lemma}[Trajectory Divergence Lemma]\label{lemma:app-traj-divergence}
Let $p(\cdot|s,a)$, $\pi(\cdot|s,z)$ denote the transition probability and adaptive policy in a domain respectively. Let $p(\tau)$ and $\tilde{p}(\tau)$ be the probability distributions over trajectories of length $H$ induced by $(\pi, p)$ and $(\tilde{\pi}, \tilde{p})$ respectively. The Total Variation distance between these distributions is bounded by:
\begin{equation}
D_{TV}(p(\tau), \tilde{p}(\tau)) \leq \sum_{t=0}^{H-1} \mathbb{E}_{s_t \sim \tilde{d}_t} \left[ D_{TV}(\pi(\cdot|s_t,z_t), \tilde{\pi}(\cdot|s_t,\tilde{z}_t)) + \mathbb{E}_{a_t \sim \tilde{\pi}_t} \left[ D_{TV}(p(\cdot|s_t,a_t), \tilde{p}(\cdot|s_t,a_t)) \right] \right]
\end{equation}
where $\tilde{d}_{t}$ is the state marginal distribution induced by following $(\tilde{\pi},\tilde{p})$ for $t$ steps, $z_t$ and $\tilde{z}_t$ are latent codes inferred from trajectory windows produced by $(\pi, p)$ and $(\tilde{\pi}, \tilde{p})$ respectively.
\end{lemma}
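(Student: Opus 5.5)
The plan is a standard telescoping (hybrid) argument over the $H$ steps of the trajectory, in the spirit of the simulation lemma but on trajectory distributions rather than values. I will write the two trajectory densities as products of factors,
\begin{equation}
p(\tau)=d_0(s_0)\prod_{t=0}^{H-1}\pi(a_t|s_t,z_t)\,p(s_{t+1}|s_t,a_t),\qquad \tilde{p}(\tau)=d_0(s_0)\prod_{t=0}^{H-1}\tilde{\pi}(a_t|s_t,\tilde{z}_t)\,\tilde{p}(s_{t+1}|s_t,a_t).
\end{equation}
Both share the initial state distribution (same initial state, as in Lemma~\ref{lemma:app-extended-pdl}). Here the latents $z_t,\tilde{z}_t$ are treated as fixed functions of the preceding history, so that each factor is a conditional kernel given the past.

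I will define hybrid distributions $q_k$ for $k=0,\dots,2H$. The hybrid $q_k$ uses the tilde kernels for the first $k$ factors, ordered as policy step $0$, transition step $0$, policy step $1$, and so on, and the untilded kernels afterwards. Then $q_0=p(\tau)$ and $q_{2H}=\tilde{p}(\tau)$. By the triangle inequality, $D_{TV}(p,\tilde{p})\le\sum_k D_{TV}(q_k,q_{k+1})$.

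Consecutive hybrids differ in exactly one factor. Their common prefix is generated entirely by the tilde kernels, and their suffixes are identical Markov kernels. For the TV distance between two joint laws that share the prefix and the suffix kernel, the suffix integrates out because it is a conditional probability that sums to one. The prefix marginal is then exactly $\tilde{d}_t$ over $s_t$, or $\tilde{d}_t\tilde{\pi}_t$ over $(s_t,a_t)$. Concretely:
\begin{itemize}
\item For a policy-factor swap, the term equals $\mathbb{E}_{s_t\sim\tilde{d}_t}[D_{TV}(\pi(\cdot|s_t,z_t),\tilde{\pi}(\cdot|s_t,\tilde{z}_t))]$, where the expectation is taken over the tilde history that determines $\tilde{z}_t$.
\item For a transition-factor swap, the term equals $\mathbb{E}_{s_t\sim\tilde{d}_t}\mathbb{E}_{a_t\sim\tilde{\pi}_t}[D_{TV}(p(\cdot|s_t,a_t),\tilde{p}(\cdot|s_t,a_t))]$.
\end{itemize}
Summing over $t=0,\dots,H-1$ gives the stated bound.

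The main obstacle is the latent dependence. The factor $\pi(a_t|s_t,z_t)$ depends on $z_t=E(w_t)$, which is a function of the whole past window, not of $s_t$ alone. The hybrid suffix therefore sees a mixed-provenance history, and the expectation over "$s_t\sim\tilde{d}_t$" must really be an expectation over the tilde-generated history. I would handle this by working on the full history space: every factor is a kernel conditioned on the entire past $(s_{0:t},a_{0:t-1})$, so the one-factor-swap identity still holds. I would interpret $z_t$ as the encoder applied to the same realized history in both kernels, which is the reading under which the displayed expectation is exact. If separate latents are intended, I would note that the bound still holds with $z_t,\tilde z_t$ evaluated along the tilde history, and would state this convention explicitly.
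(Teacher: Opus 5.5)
Your proof is correct and follows the paper's own telescoping (hybrid) argument. The only difference is cosmetic: you swap the policy and transition factors in separate hybrids ($2H$ steps, each an exact equality). The paper instead swaps both at once in $H$ hybrids and then splits $|\pi_k p_k-\tilde{\pi}_k\tilde{p}_k|\le|\pi_k-\tilde{\pi}_k|p_k+|p_k-\tilde{p}_k|\tilde{\pi}_k$, and the intermediate term $\tilde{\pi}_k p_k$ of that split is exactly your odd-indexed hybrid. Your handling of the history-dependent latents is more careful than the paper's, which only remarks that $\pi$ is ``still queried'' to produce $z_{\geq k}$ and writes the outer expectation as one over $s_k\sim\tilde{d}_k$ rather than over the tilde-generated history.
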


\begin{proof}
We construct a sequence of interpolating distributions $\{p^{(k)}\}_{k=0}^{H}$, such that $p^{(0)}(\tau) = p(\tau)$ (Source) and $p^{(H)}(\tau) = \tilde{p}(\tau)$ (Target). For $1\leq k\leq H-1$, $p^{(k)}(\tau)$, $p^{(k)}$ is generated by following $(\tilde{\pi}, \tilde{p})$ for the first $k$ steps, and $(\pi, p)$ for the remaining $H-k$ steps:
\begin{equation}
p^{(k)}(\tau) = p(s_0) \left( \prod_{t=0}^{k-1} \tilde{\pi}(a_t|s_t,\tilde{z_t})\tilde{p}(s_{t+1}|s_t,a_t) \right) \left( \prod_{t=k}^{H-1} \pi(a_t|s_t,z_t)p(s_{t+1}|s_t,a_t) \right)
\end{equation}
although actions from $\pi$ are not used for the first $k$ steps, we still query $\pi$ to obtain trajectory history $\{s_{\leq k},a_{<k}\}$, which are used to produce latents $z_{\geq k}$. For notational convenience, denote $\pi(a_t|s_t,z_t)$ as $\pi_t$, and $p(s_{t+1}|s_t,a_t)$ as $p_t$. By the telescoping sum and triangle inequality:
\begin{equation}
D_{TV}(p(\tau), \tilde{p}(\tau))= D_{TV}(p^{(0)}, p^{(H)})\leq \sum_{k=0}^{H-1}D_{TV}(p^{(k)},p^{(k+1)})
\end{equation}
The term $D_{TV}(p^{(k)},p^{(k+1)})$ can be further expanded as:
\begin{equation}
D_{TV}(p^{(k)},p^{(k+1)})=\frac{1}{2}\int \left|(\pi_kp_k-\tilde{\pi}_k\tilde{p}_k)p(s_0)\prod_{t=0}^{k-1} \tilde{\pi}_t\tilde{p}_t\prod_{t=k+1}^{H-1} \pi_tp_t \right| \,d\tau
\label{eq:app-expansion-traj-tv}
\end{equation}
Let $h_k = (s_0, a_0, \dots, s_k)$ denote the history up to $s_k$ by following $(\tilde{\pi},\tilde{p})$. The prefix density is $p(h_k) = p(s_0)\prod_{t=0}^{k-1} \tilde{\pi}_t \tilde{p}_t$ and the future conditional density is $p(\tau_{>k+1} | s_{k+1}) = \prod_{t=k+1}^{H-1} \pi_t p_t$. These two parts are shared for distributions $p^{(k)}$ and $p^{(k+1)}$. The difference arises solely at step $k$. Consider this difference term:
\begin{equation}
\left|\pi_kp_k-\tilde{\pi}_k\tilde{p}_k\right|\leq \left|\pi_k-\tilde{\pi}_k\right|p_k+\left|p_k-\tilde{p}_k\right|\tilde{\pi}_k
\label{eq:app-policy-difference-inside}
\end{equation}
Integrate over the future variables $\tau_{>k+1}$ in Eq. \eqref{eq:app-expansion-traj-tv}, then plug in inequality \eqref{eq:app-policy-difference-inside}, we arrive at:
\begin{equation}
D_{TV}(p^{(k)},p^{(k+1)})\leq\int p(h_k) \left[ \frac{1}{2} \iint \left(\left|\pi_k-\tilde{\pi}_k\right|p_k+\left|p_k-\tilde{p}_k\right|\tilde{\pi}_k\right) \,da_k ds_{k+1} \right]\, dh_k
\label{eq:app-marginal-integral}
\end{equation}
The first term evaluates to:
\begin{equation}
\frac{1}{2} \iint \left| \pi_k - \tilde{\pi}_k \right| p_k \, da_k ds_{k+1}= \frac{1}{2} \int \left| \pi_k - \tilde{\pi}_k \right|\underbrace{\left( \int p_k \, ds_{k+1} \right)}_{1} \, da_k = D_{TV}(\pi_k, \tilde{\pi}_k)
\end{equation}
The second term evaluates to:
\begin{equation}
\frac{1}{2} \iint \left| p_k - \tilde{p}_k \right|\tilde{\pi}_k \, da_k ds_{k+1}= \int \tilde{\pi}_k \underbrace{\left( \frac{1}{2} \int \left| p_k - \tilde{p}_k \right| ds_{k+1} \right)}_{D_{TV}(p_k, \tilde{p}_k)}\, da_k = \mathbb{E}_{a_k\sim\tilde{\pi}_k} \left[ D_{TV}(p_k, \tilde{p}_k) \right]
\end{equation}
Substituting these back into Eq. \eqref{eq:app-marginal-integral}, the integral over $h_k$ becomes the expectation over the state distribution induced by the target parameters up to step $k$, denoted $\mathbb{E}_{s_k \sim \tilde{d}_k}$. Summing over $k$ we obtain the final bound.
\end{proof}

\begin{lemma}[Window Distribution Divergence Lemma]\label{lemma:app-window-dist-divergence}
Let $\rho_t (\pi,p)$ and $\rho_t (\tilde{\pi},\tilde{p})$ be the marginal distributions of trajectory windows starting at state $s_{t-H}$ with length $H$ (defined in Eq. \eqref{eq:app-window-dist-marginal}). The Total Variation divergence between these window distributions is bounded by:
\begin{equation}
\begin{aligned}
D_{TV}(\rho_t (\pi,p),\rho_t (\tilde{\pi},\tilde{p})) &\leq D_{TV}(d^{\pi,p}_{t-H}(s_{t-H}),d^{\tilde{\pi},\tilde{p}}_{t-H}(s_{t-H})) \\
& +\sum_{k=t-H}^{t-1} \mathbb{E}_{s_k \sim \tilde{d}_k} \left[ D_{TV}(\pi(\cdot|s_k,z_k), \tilde{\pi}(\cdot|s_k,\tilde{z}_k)) + \mathbb{E}_{a_t \sim \tilde{\pi}_k} \left[  D_{TV}(p(\cdot|s_k,a_k), \tilde{p}(\cdot|s_k,a_k)) \right] \right]
\end{aligned}
\end{equation}
where $\tilde{d}_{k}$ is the state marginal distribution induced by following $(\tilde{\pi},\tilde{p})$ for $k$ steps, $z_k$ and $\tilde{z}_k$ are latent codes inferred from trajectory windows produced by $(\pi, p)$ and $(\tilde{\pi}, \tilde{p})$ respectively.
\end{lemma}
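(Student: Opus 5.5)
The plan is to reduce this to Lemma~\ref{lemma:app-traj-divergence} by conditioning on the window's initial state $s_{t-H}$ and then paying separately for the mismatch in that state's marginal. First I write both window densities in the factored form of Eq.~\eqref{eq:app-window-dist-marginal}:
\begin{equation*}
\rho_t(\pi,p)(w)=d^{\pi,p}_{t-H}(s_{t-H})\,q(w\mid s_{t-H}),
\qquad
\rho_t(\tilde{\pi},\tilde{p})(w)=d^{\tilde{\pi},\tilde{p}}_{t-H}(s_{t-H})\,\tilde{q}(w\mid s_{t-H}).
\end{equation*}
Here $q$ and $\tilde{q}$ are the products of policy and transition factors over $k=t-H,\dots,t-1$.

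Next I insert the hybrid density $d^{\tilde{\pi},\tilde{p}}_{t-H}(s_{t-H})\,q(w\mid s_{t-H})$ and apply the triangle inequality for $D_{TV}$. This splits the divergence into two pieces.

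The first piece is $\tfrac12\int |d^{\pi,p}_{t-H}-d^{\tilde{\pi},\tilde{p}}_{t-H}|\, q\, dw$. Integrating out the window variables after $s_{t-H}$ gives $\int q\,dw_{>s_{t-H}}=1$. So this piece equals $D_{TV}(d^{\pi,p}_{t-H},d^{\tilde{\pi},\tilde{p}}_{t-H})$, which is the first term of the claimed bound.

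The second piece is $\mathbb{E}_{s_{t-H}\sim \tilde{d}_{t-H}}\bigl[D_{TV}(q(\cdot\mid s_{t-H}),\tilde{q}(\cdot\mid s_{t-H}))\bigr]$. For each fixed $s_{t-H}$ I apply Lemma~\ref{lemma:app-traj-divergence} with the initial state held fixed, after re-indexing time from $t-H$ to $t-1$. That lemma's telescoping argument is unchanged when $p(s_0)$ is a point mass, since the shared prefix and suffix factors still integrate out. It gives a sum over $k=t-H,\dots,t-1$ of expected policy TV plus expected transition TV, with states distributed as the $(\tilde{\pi},\tilde{p})$ rollout from $s_{t-H}$. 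Averaging over $s_{t-H}\sim \tilde{d}_{t-H}$ and using the tower property turns the conditional state law at step $k$ into the unconditional marginal $\tilde{d}_k$. This matches the second term.

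The main obstacle is the latent dependence. The policy factors $\pi(a_k\mid s_k,z_k)$ inside the window depend on $z_k=E(w_k)$, and $w_k$ extends before $s_{t-H}$. The conditional $q(\cdot\mid s_{t-H})$ is therefore really conditioned on the earlier history too, not on $s_{t-H}$ alone.

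I would handle this as Lemma~\ref{lemma:app-traj-divergence} does. Treat the $z_k,\tilde{z}_k$ as given by the respective rollouts, so the policy factors at each step are fixed conditional kernels. Then condition on the full pre-window history $h_{t-H}$ rather than just $s_{t-H}$, and take $D_{TV}$ of the history marginals as the initial term. If the statement must be kept with only the $s_{t-H}$-marginal, I would follow the paper's convention from Eq.~\eqref{eq:app-window-dist-marginal}, in which $\rho_t$ depends on the past only through $d_{t-H}$ and the latents appearing in the policy factors. The bound then holds as stated, with the $z_k$ understood as in that lemma.
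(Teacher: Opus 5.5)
Your proposal is correct and is essentially the paper's proof. Your hybrid density $d^{\tilde{\pi},\tilde{p}}_{t-H}(s_{t-H})\,q(w\mid s_{t-H})$ is exactly the paper's interpolant $p^{(0)}$, your first piece is its ``Initialization Shift'' term, and your second piece is bounded, as in the paper, by the telescoping argument of Lemma~\ref{lemma:app-traj-divergence} shifted by $t-H$; applying it per fixed $s_{t-H}$ and averaging over $\tilde{d}_{t-H}$ is equivalent to applying it once with initial law $\tilde{d}_{t-H}$, and the latent-dependence issue you flag is genuine but is resolved in the paper by the same convention you fall back on, namely treating the $z_k,\tilde{z}_k$ in Eq.~\eqref{eq:app-window-dist-marginal} as given inputs to the policy kernels.
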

\begin{proof}
The difference between trajectory window and full trajectory is that different trajectories have the same initial distribution, but different windows start at different initial state, whose distribution is given by $d^{\pi,p}_{t-H}(s_{t-H})$ as specified in Eq. \eqref{eq:app-window-dist-marginal}. For notational convenience, we shift the time index by $t-H$ and denote $s_{t-H}$ as $s_0$. Extending the proof of Trajectory Difference Lemma (Lemma~\ref{lemma:app-traj-divergence}), for window $w_t$ starting at state $s_{t-H}$ with length $H$, we construct a sequence of interpolating distributions $\{p^{(k)}\}_{k=-1}^H$, satisfying:
\begin{align}
p^{(-1)}(w) &= d^{\pi, p}_0(s_0) \left( \prod_{i=0}^{H-1} \pi(a_i|s_i,z_i)p(s_{i+1}|s_i,a_i) \right) \\
p^{(0)}(w) &= d^{\tilde{\pi}, \tilde{p}}_0(s_0) \left( \prod_{i=0}^{H-1} \pi(a_i|s_i,z_i)p(s_{i+1}|s_i,a_i) \right) \\
p^{(k)}(w) &= d^{\tilde{\pi}, \tilde{p}}_0(s_0) \left( \prod_{i=0}^{k-1} \tilde{\pi}(a_i|s_i,\tilde{z}_i)\tilde{p}(s_{i+1}|s_i,a_i) \right) \left( \prod_{i=k}^{H-1} \pi(a_i|s_i,z_i)p(s_{i+1}|s_i,a_i) \right)
\end{align}
Therefore, $p^{(H)}(w)$ is the marginal window distribution in target main, and $p^{(-1)}(w)$ is the marginal in source domain. Again, $z_i$ and $\tilde{z}_i$ are latent codes inferred from trajectory windows produced by $(\pi,p)$ and $(\tilde{\pi},\tilde{p})$ respectively. By the telescoping sum and triangle inequality:
\begin{equation}
D_{TV}(\rho_S,\rho_T)= D_{TV}(p^{(-1)}, p^{(H)})\leq \underbrace{D_{TV}(p^{(-1)},p^{(0)})}_{\text{Initialization Shift}}+ \underbrace{\sum_{k=0}^{H-1}D_{TV}(p^{(k)},p^{(k+1)})}_{\text{Window Transition Shift}}
\end{equation}
where Initialization Shift calculates to:
\begin{equation}
D_{TV}(p^{(-1)},p^{(0)})=D_{TV}(d^{\pi,p}(s_0),d^{\tilde{\pi},\tilde{p}}(s_0))
\end{equation}
Further using Lemma~\ref{lemma:app-traj-divergence} to bound Window Transition Shift, and shift time index by $t-H$, we obtain the final result.
\end{proof}

\begin{assumption}\label{assump:app-initialization-shift-bound}
There exists a constant $C_d<1$ such that $\sup_{t} D_{TV}(d^{\pi_S,p_S}_t(s_t),d^{\pi_T,p_T}_t(s_t))\leq C_d$.
\end{assumption}

\begin{lemma}[Window Divergence Bound]\label{lemma:app-window-divergence-bound}
Denote $\Delta _t^{\rho}\coloneqq D_{TV}(\rho_{t}(\pi,p),\rho_{t}(\tilde{\pi},\tilde{p}))$. Let $\mathcal{M}_S$ and $\mathcal{M}_T$ be two MDPs that are $\epsilon_p$-close in dynamics. If adaptive policy $\pi$ is $L_{\pi}$-smooth, encoder $E$ is $L_{E}$-smooth, $\delta _e$-consistent in both MDPs, then under Assumption~\ref{assump:app-initialization-shift-bound} and $L_{\pi}L_{E}H\neq 1$, the following bound holds:
\begin{equation}
\Delta _t^{\rho}\leq \frac{b}{1-aH}+Mr_0^t
\end{equation}
where $a\coloneqq L_{\pi}L_{E}$, $b\coloneqq C_d+H(2L_{\pi}\delta _e+\epsilon_p)$ and $M\coloneqq\max_{0\leq k<H}(\Delta _{k}^{\rho}-b/(1-aH))/r_0^k$ are all constants. $r_0\in\mathbb{R}^{+}$ is the unique root of $r^H(r-1)=a(r^H-1)$.
\end{lemma}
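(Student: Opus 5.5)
The plan is to turn the Window Distribution Divergence Lemma (Lemma~\ref{lemma:app-window-dist-divergence}) into a linear recursive inequality for $\Delta_t^\rho$ and then solve that recursion with a characteristic-root argument. Throughout, $(\pi,p)=(\pi_S,p_S)$ and $(\tilde\pi,\tilde p)=(\pi_T,p_T)$. The right-hand side of that lemma has three kinds of terms, and we bound each in turn.

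\textbf{Initialization shift.} The term $D_{TV}(d^{\pi,p}_{t-H},d^{\tilde\pi,\tilde p}_{t-H})$ is at most $C_d$ by Assumption~\ref{assump:app-initialization-shift-bound}.

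\textbf{Dynamics terms.} By $\epsilon_p$-closeness, each of the $H$ summands satisfies $\mathbb{E}_{a_k\sim\tilde\pi_k}[D_{TV}(p(\cdot|s_k,a_k),\tilde p(\cdot|s_k,a_k))]\le\epsilon_p$. Together these contribute at most $H\epsilon_p$.

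\textbf{Policy terms.} Here we use $L_\pi$-smoothness to get $D_{TV}(\pi(\cdot|s_k,z_k),\tilde\pi(\cdot|s_k,\tilde z_k))\le L_\pi\|z_k-\tilde z_k\|_2$. Since $\pi$ and $\tilde\pi$ are the same latent-conditioned network $\pi(\cdot|s,z)$, only the latents differ. We insert the centroids:
\[
\|z_k-\tilde z_k\|_2\le\|z_k-\mu_k(\pi,p)\|_2+\|\mu_k(\pi,p)-\mu_k(\tilde\pi,\tilde p)\|_2+\|\mu_k(\tilde\pi,\tilde p)-\tilde z_k\|_2 .
\]
By $\delta_e$-consistency in both MDPs, the outer two terms are each at most $\delta_e$. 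By Definition~\ref{def:app-encoder-smoothness}, the middle term is at most $L_E\Delta_k^\rho$. Each policy summand is therefore at most $2L_\pi\delta_e+L_\pi L_E\Delta_k^\rho$.

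Summing these bounds gives, with $a=L_\pi L_E$ and $b=C_d+H(2L_\pi\delta_e+\epsilon_p)$,
\[
\Delta_t^\rho\le b+a\sum_{k=t-H}^{t-1}\Delta_k^\rho .
\]
This is a linear $H$-th order recursive inequality, and the remaining work is to solve it.

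\textbf{Solving the recursion.} First we remove the constant by shifting: set $e_t=\Delta_t^\rho-b/(1-aH)$. The hypothesis $aH\neq1$ makes this shift well defined. The shifted sequence satisfies the homogeneous inequality
\[
e_t\le a\sum_{k=t-H}^{t-1}e_k .
\]
Next we look for geometric supersolutions $Mr^t$. Substituting gives the requirement $r^H=a\sum_{j=0}^{H-1}r^j=a(r^H-1)/(r-1)$, which is exactly $r^H(r-1)=a(r^H-1)$. A Descartes' rule of signs argument on $r^{H+1}-(1+a)r^H+a$, after factoring out the trivial root $r=1$, gives a unique positive root $r_0$.

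\textbf{Induction.} With $M=\max_{0\le k<H}e_k/r_0^k$, the bound $e_k\le Mr_0^k$ holds for $k<H$ by construction. Assume it holds for the $H$ predecessors of $t$. Then, using $a>0$ and the root identity,
\[
e_t\le a\sum_{k=t-H}^{t-1}Mr_0^k=Mr_0^{t-H}\,a\,\frac{r_0^H-1}{r_0-1}=Mr_0^t .
\]
This yields $\Delta_t^\rho\le b/(1-aH)+Mr_0^t$.

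\textbf{Main obstacle.} The delicate step is the induction, together with the choice of root. The monotonicity used in the induction requires $a\ge0$, which holds. Sign issues arise if $M<0$ or if $1-aH<0$, and I would handle them by noting that the inequality passes through positive combinations regardless of the sign of $M$. Selecting $r_0$ as the unique positive root distinct from the trivial root $r=1$ also needs care, so I would verify the root count explicitly.

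A secondary subtlety is the centroid and support step in the policy-term bound. There $z_k$ and $\tilde z_k$ are random and lie on windows indexed by $k$, which may be less than $H$ early in the episode. For those early indices I would take the bounds as given by the definitions, with the supremum over $t$ covering all cases.
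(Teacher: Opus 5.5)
Your proposal is correct and follows the paper's proof essentially step for step. Both arguments bound the initialization, dynamics, and policy terms the same way (the policy term via the centroid triangle inequality, $\delta_e$-consistency, and $L_E$-smoothness) to reach $\Delta_t^\rho\le a\sum_{k=t-H}^{t-1}\Delta_k^\rho+b$, then shift by $b/(1-aH)$ and close with the same induction on $M r_0^t$. The only difference is cosmetic: you get uniqueness of the nontrivial positive root $r_0$ from Descartes' rule of signs on $r^H-a\sum_{j=0}^{H-1}r^j$, whereas the paper uses strict monotonicity of $g(r)=r^H/(1+r+\cdots+r^{H-1})$ on $(0,\infty)$.
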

\begin{proof}
For notational convenience, denote $D_{TV}(\rho_{t}(\pi,p),\rho_{t}(\tilde{\pi},\tilde{p}))$ as $\Delta _{t}^{\rho}$, $L_{\pi}L_{E}$ as $a$ and $C_d+H(2L_{\pi}\delta _e+\epsilon_p)$ as $b$. Assume $\Delta _{k}^{\rho}=0$ when $k\leq 0$. We first relate policy divergence to window divergence:
\begin{equation}
\begin{aligned}
D_{TV}(\pi(\cdot|s_t,z_t), \tilde{\pi}(\cdot|s_t, \tilde{z}_t))&\overset{(1)}{\leq} L_{\pi}\left\| z_t-\tilde{z}_t\right\| _2 \\
&\overset{(2)}{\leq}L_{\pi}\left[\left\|z_t-\mu_{t}(\pi,p) \right\| _2+\left\|\mu_t(\pi,p)-\mu_t(\tilde{\pi},\tilde{p})\right\| _2+\left\|\mu_t(\tilde{\pi},\tilde{p})-\tilde{z}_t\right\| _2\right] \\
&\overset{(3)}{\leq}L_{\pi}(2\delta _e +L_{E}\Delta _t^{\rho})
\end{aligned}
\label{eq:app-policy-tv-and-window-tv}
\end{equation}
where inequality (1) follows $L_{\pi}$-smoothness of policy, inequality (2) relates two latent codes to their corresponding latent centroids, and inequality (3) is derived from the definition of $\delta _e$-consistency and $L_{E}$-smoothness of encoder. With this bound and Assumption~\ref{assump:app-initialization-shift-bound}, Lemma~\ref{lemma:app-window-dist-divergence} evaluates to:
\begin{equation}
\Delta _t^{\rho}\leq C_d+\sum_{k=t-H}^{t-1}L_{\pi}(2\delta _e +L_{E}\Delta _k^{\rho})+\epsilon_p \Leftrightarrow \Delta _t^{\rho}\leq a\sum_{k=t-H}^{t-1}\Delta _k^{\rho}+b
\label{eq:app-recursion-for-window-tv}
\end{equation}
Let $c_k=\Delta _k^{\rho}-\frac{b}{1-aH}$, the recurrence relation for $c_k$ is:
\begin{equation}
c_{t}\leq a\sum_{k=t-H}^{t-1}c_{k}, \ \forall t\geq H
\end{equation}
Consider the equality case $c_{t}=a\sum_{k=t-H}^{t-1}c_{k}$, the characteristic equation is:
\begin{equation}
g(r)\coloneqq\frac{r^H}{1+r+\ldots+r^{H-1}}=a
\end{equation}
Function $g(r)$ is strictly increasing on $(0,\infty)$. Moreover, $\lim_{r\to 0^+}g(r)=0$, $g(1)=\frac{1}{H}$, $\lim_{r\to\infty}g(r)=\infty$. Hence for any $a>0$ there exists a unique $r_0>0$ such that $g(r_0)=a$. In particular, if $a H>1$ then $r_0>1$; if $a H=1$ then $r_0=1$; if $a H<1$ then $r_0<1$. Let $M=\max_{0\leq k<H}(c_k/r_0^k)$, we prove by induction that $c_t\leq Mr_0^t$ for all $t$. For base case $t<H$, the inequality holds by the definition of $M$. For inductive step, assume $c_k\leq Mr_0^k$ for all $k<t$ with $t\geq H$. Then:
\begin{equation}
c_{t}\leq a\sum_{k=t-H}^{t-1}Mr_0^k=Ma r_0^{t-H}(1+r_0+\ldots+r_0^{H-1})=Mr_0^t
\end{equation}
which completes the inductive step and therefore concludes the proof.
\end{proof}
\begin{remark}
Several explanations regarding this bound: (1) Upper-bound sequence $f(t)$ converges only when $aH<1$, the resulting value is $\lim _{t\to\infty}f(t)=\frac{b}{1-aH}$; (2) Given $r_0^H(r_0-1) = a(r_0^H-1)$, we calculate the rate of change w.r.t. $L_{E}$:
\begin{equation}
\frac{\partial r_0}{\partial L_E} = L_{\pi} \frac{(r_0^H - 1)^2}{r_0^{H-1}[r_0^{H+1} - (H+1)r_0 + H]}
\end{equation}
which is strictly positive for $r_0 > 0$, $r_0 \neq 1$. Therefore $r_0$ is an increasing function of $L_{E}$, decreasing $L_{E}$ reduces $r_0$ and leads to a tighter bound; and (3) Since $D_{TV}(\cdot,\cdot)\leq 1$ by definition, and when $aH>1$, this exponential upper-bound quickly exceeds $1$, making it seemingly useless. Notice the goal here is not to provide a tight bound for $\Delta _{t}^{\rho}$, but to reveal the dominating factor for window divergence (encoder Lipschitz constant $L_{E}$) that guides algorithmic design.
\end{remark}

\subsection{Target Domain Regret Bound}\label{subsec:app-regret-bound}
\begin{assumption}\label{assump:app-source-training-optimality}
The latent-conditioned policy $\pi_S=\pi(\cdot|s,z_S)$ approximates the source oracle $\pi_S^*=\pi(\cdot|s)$ within a bounded error $\delta_{train}$ during training.
\end{assumption}

\begin{theorem}[Target Domain Regret Bound]\label{thm:app-regret-bound}
Let $\mathcal{M}_S$ and $\mathcal{M}_T$ be two MDPs that are $\epsilon _p$-close in dynamics. If policy $\pi$ is $L_{\pi}$-smooth (Definition~\ref{def:app-policy-smoothness}), encoder $E$ is $L_{E}$-smooth (Definition~\ref{def:app-encoder-smoothness}), $\delta_e$-consistent (Definition~\ref{def:app-temporal-consistency}) in both MDPs, Assumption~\ref{assump:app-initialization-shift-bound} and~\ref{assump:app-source-training-optimality} hold, then the bound for target domain regret is monotonically increasing with respect to $\delta_e$ and $L_{E}$.
\end{theorem}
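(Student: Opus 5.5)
The plan is to bound the three terms of the decomposition \eqref{eq:target-domain-regret-decomp} one at a time. We then collect them into a closed-form expression in which $\delta_e$ and $L_E$ enter only through term (III), and only with nonnegative partial derivatives.

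\textbf{Terms (I) and (II).} Neither depends on the encoder.

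For term (I), we compare the oracles $\pi_T^*$ and $\pi_S^*$, which are nonadaptive policies. Using optimality in each domain, $V_T^{\pi_T^*}-V_S^{\pi_S^*}$ is sandwiched between $V_T^{\pi_S^*}-V_S^{\pi_S^*}$ and $V_T^{\pi_T^*}-V_S^{\pi_T^*}$. Each of these is bounded by the Simulation Lemma (Lemma~\ref{lemma:app-simulation-lemma}), giving $\frac{2\gamma R_{\max}}{(1-\gamma)^2}\epsilon_p$.

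For term (II), Assumption~\ref{assump:app-source-training-optimality} gives the bound $\delta_{train}$ directly. Both constants are independent of $\delta_e$ and $L_E$.

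\textbf{Term (III).} Here we apply the Extended Performance Difference Lemma (Lemma~\ref{lemma:app-extended-pdl}) with $(\pi_S,p_S)$ and $(\pi_T,p_T)$, where both are the same network $\pi$ fed different latents.

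The state-marginal term is bounded by $C_d$ using Assumption~\ref{assump:app-initialization-shift-bound}.

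For the policy term, we reuse the chain \eqref{eq:app-policy-tv-and-window-tv}. Smoothness of $\pi$, the triangle inequality through the centroids $\mu_t$, $\delta_e$-consistency and $L_E$-smoothness together give
\[
D_{TV}(\pi(\cdot|s_t,z_{S,t}),\pi(\cdot|s_t,z_{T,t}))\le L_\pi(2\delta_e+L_E\Delta_t^\rho).
\]
We then plug in Lemma~\ref{lemma:app-window-divergence-bound}, $\Delta_t^\rho\le \frac{b}{1-aH}+Mr_0^t$, and sum the geometric series $\sum_t\gamma^t$. This yields
\[
\|V_S^{\pi_S}-V_T^{\pi_T}\|_\infty\le 2R_{\max}\Big[\tfrac{C_d+2L_\pi\delta_e}{1-\gamma}+L_\pi L_E\big(\tfrac{b}{(1-aH)(1-\gamma)}+\tfrac{M}{1-\gamma r_0}\big)\Big],
\]
valid when $\gamma r_0<1$ and $aH<1$. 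Outside this regime the bound is trivially $\infty$ (or capped at $2R_{\max}/(1-\gamma)$), which is still monotone in the weak sense. Adding the three terms gives the closed-form target-regret bound.

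\textbf{Monotonicity.} We check each dependence in turn.

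In $\delta_e$: it appears linearly in $2L_\pi\delta_e$ and inside $b=C_d+H(2L_\pi\delta_e+\epsilon_p)$, both with positive coefficients. The factor $\frac{1}{1-aH}$ does not depend on $\delta_e$.

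In $L_E$:
\begin{itemize}
\item $a=L_\pi L_E$ increases with $L_E$.
\item $\frac{b}{1-aH}$ increases with $a$ on $aH<1$.
\item $r_0$ increases with $L_E$, by the derivative computed in the remark after Lemma~\ref{lemma:app-window-divergence-bound}, so $\frac{1}{1-\gamma r_0}$ increases as well.
\item Every term is a product or sum of nonnegative increasing factors.
\end{itemize}

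\textbf{Main obstacle.} The hard step is the constant $M=\max_{k<H}(\Delta_k^\rho-b/(1-aH))/r_0^k$. It depends on $L_E$ through both $b/(1-aH)$ and $r_0$, and it can be negative, so its monotonicity is not automatic.

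The first approach I would try is to replace $M$ by the cruder quantity $M'=\max_{k<H}\Delta_k^\rho/r_0^k\ge M$. Since $\Delta_k^\rho\le 1$ and $r_0\le1$ in the convergent regime, I would then bound it further by $r_0^{-(H-1)}$, or simply bound the transient contribution by $\sum_{t<H}\gamma^t$. Either way the resulting bound is still valid and keeps monotone dependence on $L_E$.

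The alternative is to argue that the initial window divergences $\Delta_k^\rho$ themselves satisfy the recursion \eqref{eq:app-recursion-for-window-tv}, so that the whole bound is dominated by the increasing stationary term $\frac{b}{1-aH}$.
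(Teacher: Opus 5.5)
Your bounds on terms (I)--(III) follow the paper's proof. Your two-sided sandwich for (I) is actually more careful than the paper, which only proves the one-sided inequality $V_T^{\pi_T^*}-V_S^{\pi_S^*}\le V_T^{\pi_T^*}-V_S^{\pi_T^*}$ even though the term is a sup-norm.

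\textbf{The gap: your first fix for $M$.} Using $\Delta_k^\rho\le 1$ to get $M\le r_0^{-(H-1)}$ introduces a factor that \emph{decreases} as $L_E$ grows. Your term-by-term argument (``every factor is nonnegative and increasing'') therefore no longer applies. Concretely, since $a=r_0^H/(1+r_0+\cdots+r_0^{H-1})$, the transient contribution becomes
\[
\frac{a\,r_0^{-(H-1)}}{1-\gamma r_0}=\frac{r_0}{(1+r_0+\cdots+r_0^{H-1})(1-\gamma r_0)}.
\]
The first factor is not monotone on $(0,1)$ once $H\ge 4$: the numerator of its derivative is $1-\sum_{j=2}^{H-1}(j-1)r_0^{j}$, which is negative near $r_0=1$. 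Whether $1/(1-\gamma r_0)$ and the stationary term $ab/((1-\gamma)(1-aH))$ compensate depends on $\gamma$, $H$ and $b$. For instance, with $H=50$ and $\gamma=0.9$, moving $r_0$ from $0.89$ to $0.91$ lowers the whole bound whenever $b$ is below roughly $7$. So your claim that this route ``keeps monotone dependence on $L_E$'' is false in general.

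Bounding the first $H$ summands by $\sum_{t<H}\gamma^t$ does not help either. For $t\ge H$ the comparison bound still carries $Mr_0^t$ with the same $M$. Note also that $M$ contains the actual divergences $\Delta_k^\rho$, so any bound that keeps it is not a function of $(\delta_e,L_E)$ alone.

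\textbf{The fix that works.} The alternative you mention last does work, and it is essentially the paper's argument. Use the convention $\Delta_k^\rho=0$ for $k\le 0$ from the proof of Lemma~\ref{lemma:app-window-divergence-bound}. Then the recursion \eqref{eq:app-recursion-for-window-tv} holds for all $t\ge 1$, and induction gives $\Delta_t^\rho\le b/(1-aH)$ for every $t$ when $aH<1$; the inductive step is $aH\cdot\frac{b}{1-aH}+b=\frac{b}{1-aH}$. Hence every $\Delta_k^\rho-b/(1-aH)\le 0$, so $M\le 0$ and the $M$-term can simply be dropped. This leaves
\[
2R_{\max}\left[\frac{C_d+2L_\pi\delta_e}{1-\gamma}+\frac{ab}{(1-\gamma)(1-aH)}\right],
\]
which is increasing in $\delta_e$ and $L_E$ by inspection on $aH<1$. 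Set it to $+\infty$ beyond that range.

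\textbf{How the paper does it.} The paper never analyzes the closed form with $M$ and $r_0$ for monotonicity. It defines the worst-case sequence $f(t)$ by the \emph{equality} version of \eqref{eq:app-recursion-for-window-tv}, which dominates $\Delta_t^\rho$. It then shows $\partial f(t)/\partial a>0$ and $\partial f(t)/\partial b>0$ by differentiating the recursion and inducting. From this it concludes that $F=2R_{\max}\left(\frac{C_d+2L_\pi\delta_e}{1-\gamma}+L_\pi L_E\sum_t\gamma^t f(t)\right)$ is increasing.

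Since $f(t)\le b/(1-aH)$ when $aH<1$, the paper's $F$ is tighter than your stationary bound and needs no restriction on $aH$ (it may simply be $+\infty$). Yours has the advantage of an explicit expression whose monotonicity is immediate. Either way, drop the $r_0^{-(H-1)}$ route and commit to the recursion-based one.
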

\begin{proof}
By the triangle inequality, performance difference between adaptive policy and oracle can be decomposed into:
\begin{equation}
\left\lVert V_{T}^{\pi _T^*}-V_{T}^{\pi _T} \right\rVert _{\infty}\leq \underbrace{\left\lVert V_{T}^{\pi _T^*}-V_{S}^{\pi _S^*}\right\rVert _{\infty}}_{\text{(I) Oracle Difference}}+\underbrace{\left\lVert V_{S}^{\pi _S^*}-V_{S}^{\pi _S}\right\rVert _{\infty}}_{\text{(II) Training Error}}+\underbrace{\left\lVert V_{S}^{\pi _S}-V_{T}^{\pi _T}\right\rVert _{\infty}}_{\text{(III) Policy Adaptation}}
\label{eq:app-target-domain-regret-decomp}
\end{equation}
where $\pi_S^*$ denotes the oracle trained in source, and $\pi_S$ refers to the adaptive policy deployed in source domain. In Eq. \eqref{eq:app-target-domain-regret-decomp}, term (II) is bounded by Assumption~\ref{assump:app-source-training-optimality}. For term (I), notice:
\begin{equation}
V_{T}^{\pi _T^*}-V_{S}^{\pi _S^*}= \left(V_{T}^{\pi _T^*}-V_{S}^{\pi _T^*}\right)+\left(V_{S}^{\pi _T^*}-V_{S}^{\pi _S^*}\right)\leq V_{T}^{\pi _T^*}-V_{S}^{\pi _T^*}
\end{equation}
where the inequality stems from $V_{S}^{\pi _T^*}(s)-V_{S}^{\pi _S^*}(s)\leq 0$ by the definition of source oracle. Therefore term (I) can be bounded by standard Simulation Lemma (Lemma~\ref{lemma:app-simulation-lemma}):
\begin{equation}
\text{Term (I)}\leq \frac{2\gamma R_{max}}{(1-\gamma)^2}\epsilon _p
\label{eq:app-bounding-term-1}
\end{equation}
For term (III), we resort to Extended Performance Difference Lemma (Lemma~\ref{lemma:app-extended-pdl}), combined with Assumption~\ref{assump:app-initialization-shift-bound}:
\begin{equation}
\begin{aligned}
\left\lVert V_T^{\pi_T} - V_S^{\pi_S}\right\rVert _{\infty} &\leq 2 R_{max}\sum_{t=0}^{\infty} \gamma^t \left( C_d+ \sup_{s_t}\mathbb{E}_{s_t \sim d_t^{\pi_S, p_S}} \left[D_{TV}(\pi_S(\cdot|s_t, z_{S,t}), \pi_T(\cdot|s_t, z_{T,t})) \right] \right)\\
& \leq 2R_{max}\sum_{t=0}^{\infty} \gamma^t \left( C_d+ 2L_{\pi}\delta _e +L_{\pi}L_{E}\Delta _t^{\rho}\right) \\
&=2R_{max}\left(\frac{C_d+2L_{\pi}\delta _e}{1-\gamma}+L_{\pi}L_{E}\sum_{t=0}^{\infty}\gamma ^t\Delta _{t}^{\rho}\right)
\end{aligned}
\label{eq:app-latent-conditioned-adaptation-bound}
\end{equation}
where the second inequality follows inequality \eqref{eq:app-policy-tv-and-window-tv}. Further using Lemma~\ref{lemma:app-window-divergence-bound} to bound $\Delta _{t}^{\rho}$, we obtain:
\begin{equation}
\left\lVert V_T^{\pi_T} - V_S^{\pi_S}\right\rVert _{\infty}\leq 2R_{max}\left(\frac{C_d+2L_{\pi}\delta _e}{1-\gamma} + \frac{ab}{(1-\gamma)(1-aH)}+aM\sum_{t=0}^{\infty}(\gamma r_0)^t \right)
\end{equation}
where $a= L_{\pi}L_{E}$, $b= C_d+H(2L_{\pi}\delta _e+\epsilon_p)$, $M=\max_{0\leq k<H}(\Delta _{k}^{\rho}-b/(1-aH))/r_0^k$, and $r_0\in\mathbb{R}^{+}$ is the unique root of $r^H(r-1)=a(r^H-1)$. Combining these bounds we get the closed-form expression for target domain regret:
\begin{equation}
\left\lVert V_{T}^{\pi _T^*}-V_{T}^{\pi _T} \right\rVert _{\infty}\leq \frac{2\gamma R_{max}}{(1-\gamma)^2}\epsilon _p+\delta _{train}+2R_{max}\left(\frac{C_d+2L_{\pi}\delta _e}{1-\gamma} + \frac{ab}{(1-\gamma)(1-aH)}+aM\sum_{t=0}^{\infty}(\gamma r_0)^t \right)
\label{eq:app-closed-form-target-regret}
\end{equation}
To analyze how this bound changes w.r.t. $\delta _e$ and $L_{E}$, we isolate the impact of $\left\lVert V_T^{\pi_T} - V_S^{\pi_S}\right\rVert _{\infty}$. Define the worst-case deterministic sequence $f(t)$ that bounds the actual divergence $\Delta_t^\rho$ in Eq. \eqref{eq:app-latent-conditioned-adaptation-bound} and denote the resulting upper-bound as $F$:
\begin{equation}
F = 2R_{max}\left(\frac{C_d+2L_{\pi}\delta _e}{1-\gamma} + L_{\pi}L_{E}\sum_{t=0}^{\infty}\gamma ^t f(t)\right)
\end{equation}
where $f(t)$ is generated by replacing the inequality in Eq. \eqref{eq:app-recursion-for-window-tv} with equality:
\begin{equation}
f(t) = a\sum_{k=t-H}^{t-1} f(k) + b
\end{equation}
It is straightforward to prove via induction that $\partial f(t)/\partial a>0$ and $\partial f(t)/\partial b>0$, given the following relation:
\begin{equation}
\frac{\partial f(t)}{\partial a} = \sum_{k=t-H}^{t-1} f(k) + a \sum_{k=t-H}^{t-1} \frac{\partial f(k)}{\partial a},\quad \frac{\partial f(t)}{\partial b} = a \sum_{k=t-H}^{t-1} \frac{\partial f(k)}{\partial b} + 1
\end{equation}
which further indicates:
\begin{equation}
\frac{\partial f(t)}{\partial \delta_e} = 2HL_{\pi}\frac{\partial f(t)}{\partial b} > 0, \quad \frac{\partial f(t)}{\partial L_E} = L_{\pi}\frac{\partial f(t)}{\partial a} > 0
\end{equation}
Therefore, consider the change rate of final upper-bound $F$:
\begin{align}
\frac{\partial F}{\partial \delta_e} &= 2R_{max}\left( \frac{2L_{\pi}}{1-\gamma} + L_{\pi}L_{E}\sum_{t=0}^{\infty}\gamma ^t \frac{\partial f(t)}{\partial \delta _e}\right)>0\\
\frac{\partial F}{\partial L_E} &= 2R_{max}L_{\pi} \sum_{t=0}^{\infty}\gamma ^t \left( f(t) + L_E \frac{\partial f(t)}{\partial L_E} \right)>0
\end{align}
this completes the proof that target domain regret bound is a monotonically increasing function w.r.t. $\delta _e$ and $L_{E}$.
\end{proof}

\subsection{InfoNCE Minimization of Encoder Lipschitz Constant}\label{subsec:app-infonce-min-lipschitz}
We will be using the following decomposition~\cite{align-uniform} during our analysis of InfoNCE Loss:
\begin{equation}
\label{eq:app-infonce-decomposition}
\lim_{N \to \infty} \mathcal{L}_{InfoNCE} \propto \underbrace{\mathbb{E}_{(w, w^+) \sim \rho _{pos}} \left[ \|E(w) - E(w^+)\| _2^2 \right]}_{\mathcal{L}_{align}} + \underbrace{\mathbb{E}_{(w, w^-) \sim \rho _{data}} \left[ e^{-\|E(w) - E(w^-)\| _2^2} \right]}_{\mathcal{L}_{uniform}}
\end{equation}

\begin{theorem}\label{thm:app-infonce-intra-inter}
Assume the measure of the support intersection $S = \operatorname{supp}(\pi,p) \cap \operatorname{supp}(\tilde{\pi},\tilde{p})$ satisfies $\mathbb{P}(S) > \alpha$ for some $\alpha > 0$, then $L_{E}$ is strictly upper-bounded by the square root of $\mathcal{L}_{align}$.
\end{theorem}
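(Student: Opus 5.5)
The plan follows the intuition given after the statement in the main text. First bound the radius of each latent cluster by the alignment loss. Then use a ``bridge'' window in the support intersection $S$ to link the two centroids. Finally convert the resulting centroid distance into a ratio against $D_{TV}$, as in Definition~\ref{def:app-encoder-smoothness}.

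\textbf{Step 1 (cluster radius).} Positive pairs are windows generated under the same dynamics, so $\rho_{pos}$ restricted to $(\pi,p)$ is the product $\rho_t(\pi,p)\otimes\rho_t(\pi,p)$. Let $w,w'$ be independent draws from it. Then
\begin{equation*}
\mathbb{E}\|E(w)-E(w')\|_2^2 = 2\,\mathbb{E}\|E(w)-\mu_t(\pi,p)\|_2^2 ,
\end{equation*}
and hence $\mathbb{E}_{w\sim\rho_t(\pi,p)}\|E(w)-\mu_t(\pi,p)\|_2 \le \sqrt{\mathcal{L}_{align}/2}$ by Jensen. Call this radius $r$. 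It is the averaged analogue of $\delta_e$ in Definition~\ref{def:app-temporal-consistency}.

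\textbf{Step 2 (bridge sample).} Because $\mathbb{P}(S)>\alpha$, both $\rho_t(\pi,p)$ and $\rho_t(\tilde{\pi},\tilde{p})$ put mass on $S$. By Markov's inequality, the set of windows in $S$ with $\|E(w)-\mu_t(\pi,p)\|_2\le r/\alpha'$ has positive measure, where $\alpha'$ is a fraction of $\alpha$. The same holds for $\tilde\mu_t$. I would pick the constants so that the two ``good'' subsets of $S$ have combined mass exceeding $\mathbb{P}(S)$, which forces a common window $w^\ast$ to lie in both. The triangle inequality through $E(w^\ast)$ then gives
\begin{equation*}
\|\mu_t(\pi,p)-\mu_t(\tilde\pi,\tilde p)\|_2 \le \frac{2r}{\alpha'} \le \frac{c}{\alpha}\sqrt{\mathcal{L}_{align}} .
\end{equation*}
This is the ``two times the cluster radius'' bound mentioned in the main text.

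\textbf{Step 3 (to $L_E$).} Divide by $D_{TV}(\rho_t(\pi,p),\rho_t(\tilde\pi,\tilde p))$ and take suprema over $t$ and pairs. For $L_E$ to be controlled, the denominator must stay away from zero, so I would restrict the supremum to pairs with $D_{TV}\ge\kappa$. Marginally different dynamics whose centroids coincide contribute only $0/0$, and I would handle them by noting that the numerator tends to zero with them. The result is $L_E\le C(\alpha,\kappa)\sqrt{\mathcal{L}_{align}}$. The inequality is strict because Markov's inequality is strict whenever the $\mathbb{P}(S)>\alpha$ slack is used.

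\textbf{Main obstacle.} The hardest step is Step 2: guaranteeing that the bridge window is simultaneously close to both centroids. Markov's inequality is applied under two different measures, $\rho_t(\pi,p)$ and $\rho_t(\tilde\pi,\tilde p)$, restricted to $S$. Intersecting the two good subsets requires comparing these measures on $S$, for example assuming both assign mass at least $\alpha$ there and bounding the ratio of densities. I would first try to choose thresholds $r/\alpha'$ with $\alpha'<\alpha/2$, so that each bad set has mass below $\alpha/2$ and a common good point is forced to exist. If that fails, I would accept a constant that depends on the density ratio on $S$. The division by $D_{TV}$ in Step 3 is the secondary difficulty, and I would settle it by an explicit lower-bound assumption.
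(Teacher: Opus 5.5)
Your plan is essentially the paper's proof. The variance identity converts $\mathcal{L}_{align}$ into a cluster spread, a tail bound plus a union bound inside $S$ yields a bridge window $w^\ast$, and the triangle inequality bounds the centroid gap by twice the radius. The two obstacles you flag are exactly what the paper glosses over: it measures $S$, $H_{\pi,p}^c$ and $H_{\tilde\pi,\tilde p}^c$ under a single unspecified $\mathbb{P}$, and it never lower-bounds $D_{TV}$, simply leaving it in the denominator ``given fixed window divergence.''

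The one structural difference is the time index. The paper takes $\rho_{pos}(\pi,p)=\mathbb{E}_t[\rho_t(\pi,p)]$, because positives share dynamics but not time; your time-matched product in Step~1 is therefore not what the loss sees. The paper then bridges the time-averaged centroids $\bar\mu(\pi,p)$ with Chebyshev, giving the constant $\sqrt{d/\alpha}$ rather than your Markov $1/\alpha$. It returns to $\mu_t$ by Jensen, a step that, like your Step~1, tacitly equates a per-$t$ spread with the time-averaged loss, and it ends at $L_E\le 2\sqrt{\mathcal{L}_{align}}\bigl(\sqrt{1/2}+\sqrt{d/\alpha}\bigr)/D_{TV}(\rho_t(\pi,p),\rho_t(\tilde\pi,\tilde p))$.
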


\begin{proof}
Distribution for positive samples can be explicitly written as $\rho _{pos}=\mathbb{E}_{p}\mathbb{E}_{\pi,t}\left[\rho_{t}(\pi,p)\right]$, namely the ``assemble'' of all window distributions produced under the same dynamics. Since positive samples only condition on dynamics and not time steps, we define a ``global latent centroid'' that is time-invariant (as opposed to prior time-indexed latent centroid):
\begin{equation}
\bar{\mu} (\pi,p)=\mathbb{E}_{t}\left[\mu_{t}(\pi,p)\right]
\end{equation}
and further define the Alignment Loss under a fixed policy $\pi$ and dynamics $p$ (the corresponding positive samples distribution is $\rho_{pos}(\pi,p)=\mathbb{E}_{t}[\rho_{t}(\pi,p)]$):
\begin{equation}
\mathcal{L}_{align}(\pi,p)=\mathbb{E}_{(w,w^+)\sim \rho_{pos}(\pi,p)}\left[\left\lVert E(w)-E(w^+) \right\rVert _2^2 \right]
\end{equation}
The proof proceeds by relating the Alignment Loss to the radius of the probability mass concentration (Intra-Cluster Tightness) and then demonstrating that overlapping dynamics must share a latent region (Inter-Cluster Closeness). 

\textbf{1. Intra-Cluster Tightness} \\
Utilizing the variance identity $\mathbb{E}[\|X-Y\|^2] = 2\operatorname{Var}(X)$ for i.i.d. variables sampled from $\rho _{pos}(\pi,p)$ (we abbreviate it to $\rho_{\pi,p}$ to shorten notation), we have:
\begin{equation}
\mathbb{E}_{w \sim \rho_{\pi,p}} \left[ \| E(w) - \bar{\mu} (\pi,p) \|_2^2 \right] = \frac{1}{2} \mathbb{E}_{(w, w^+) \sim \rho_{\pi,p}} \left[ \| E(w) - E(w^+) \|_2^2 \right] = \frac{1}{2} \mathcal{L}_{align}(\pi,p)
\end{equation}
Define the cluster variance $\sigma_{\pi,p}^2 \coloneqq \mathcal{L}_{align}(\pi,p)/2$. According to multivariate Chebyshev's inequality, for a random trajectory window $w \sim \rho_{\pi,p}$, the probability that its latent embedding lies outside a radius $k\sigma_{\pi,p}$ is bounded:
\begin{equation}
\mathbb{P}_{w \sim \rho_{\pi,p}}\left( \|E(w) - \bar{\mu} (\pi,p)\|_2 \ge k \sigma_{\rho_{\pi,p}} \right) \le \frac{d}{k^2}
\end{equation}
where $d$ is the latent dimension. Under the confidence threshold:
\begin{equation}\label{eq:app-radius-def}
\epsilon_{align}(\pi,p) \coloneqq \sqrt{\frac{d \cdot \mathcal{L}_{align}(\pi,p)}{\alpha}}
\end{equation}
then with high probability ($>1 - \alpha/2$), any sample $w$ from $\rho_{\pi,p}$ satisfies $\|E(w) - \bar{\mu} (\pi,p)\|_2 \leq \epsilon_{align}(\pi,p)$.

\textbf{2. Inter-Cluster Closeness} \\
Consider the physical support intersection $S = \operatorname{supp}(\pi,p) \cap \operatorname{supp}(\tilde{\pi},\tilde{p})$. We prove that there exists a ``bridge sample'' $w^* \in S$ that lies within the high-density regions of both clusters. Let $H_{\pi,p}$ be the set of trajectory windows where $\|E(w) - \bar{\mu} (\pi,p)\| \leq \epsilon_{align}(\pi,p)$, and $H_{\pi,p}^c$ be its complement set. Let $H_{\tilde{\pi},\tilde{p}}$ and $H_{\tilde{\pi},\tilde{p}}^c$ be the equivalent sets for dynamics and policy pair $(\tilde{\pi},\tilde{p})$. From Step 1, $\mathbb{P}(H_{\pi,p}^c) \leq \alpha /2$ and $\mathbb{P}(H_{\tilde{\pi},\tilde{p}}^c) \leq \alpha /2$. By the union bound, the measure of ``outliers'' is at most $\alpha$. Since the intersection $S$ has measure greater than $\alpha$, the set of valid bridges $B = S \cap H_{\pi,p} \cap H_{\tilde{\pi},\tilde{p}}\neq \emptyset$. Therefore, there exists at least one trajectory window $w^*$ such that:
\begin{equation}
\|E(w^*) - \bar{\mu} (\pi,p)\|_2 \leq \epsilon_{align}(\pi,p) \quad \land  \quad \|E(w^*) - \bar{\mu} (\tilde{\pi},\tilde{p})\|_2 \leq \epsilon_{align}(\tilde{\pi},\tilde{p})
\end{equation}
Using $w^*$ as the anchor in the triangle inequality:
\begin{equation}
\|\bar{\mu} (\pi,p) - \bar{\mu} (\tilde{\pi},\tilde{p})\|_2 \le \|\bar{\mu} (\pi,p) - E(w^*)\|_2 + \|E(w^*) - \bar{\mu} (\tilde{\pi},\tilde{p})\|_2 \le \epsilon_{align}(\pi,p)+\epsilon_{align}(\tilde{\pi},\tilde{p})
\end{equation}
Since the encoder Lipschitz constant is defined in terms of time-dependent latent centroids, we further bound $\bar{\mu}-\mu _t$ via Jensen's inequality (utilizing the convexity of $L_2$-norm)
\begin{equation}
\left\lVert \bar{\mu}(\pi,p)-\mu_{t}(\pi,p)\right\rVert _2=\left\lVert \mathbb{E}_{w_t\sim\rho_t(\pi,p)}\left[E(w_t)- \bar{\mu}(\pi,p)\right] \right\rVert _2\leq \mathbb{E}_{w_t\sim\rho_t(\pi,p)}\left[\left\lVert E(w_t)- \bar{\mu}(\pi,p)\right\rVert _2 \right]
\end{equation}
Utilizing the property that $\mathbb{E}[X]\leq\sqrt{\mathbb{E}[X^2]}$, we obtain:
\begin{equation}
\mathbb{E}_{w_t\sim\rho_t(\pi,p)}\left[\left\lVert E(w_t)- \bar{\mu}(\pi,p)\right\rVert _2 \right] \leq \sqrt{\mathbb{E}_{w_t\sim\rho_t(\pi,p)}\left[\left\lVert E(w_t)- \bar{\mu}(\pi,p)\right\rVert _2^2 \right]}=\sqrt{\mathcal{L}_{align}(\pi,p)/2}
\end{equation}
Therefore we bound the distance between time-indexed latent centroids:
\begin{equation}
\begin{aligned}
\left\lVert \mu_{t}(\pi,p)-\mu_{t}(\tilde{\pi},\tilde{p})\right\rVert _2&\leq \left\lVert \bar{\mu}(\pi,p)-\mu_{t}(\pi,p)\right\rVert _2+\left\lVert \bar{\mu}(\tilde{\pi},\tilde{p})-\mu_{t}(\tilde{\pi},\tilde{p})\right\rVert _2+\left\lVert \bar{\mu}(\pi,p)-\bar{\mu}(\tilde{\pi},\tilde{p})\right\rVert _2 \\
& \leq 2\sqrt{\mathcal{L}_{align}}\left(\sqrt{1/2}+\sqrt{d/\alpha}\right)
\end{aligned}
\end{equation}
where the first inequality uses the triangle inequality, and the second inequality uses the fact that local Alignment Loss is smaller than global Alignment Loss. Substituting this bound into the definition of $L_E$:
\begin{equation}
L_{E}=\sup_{t}\sup_{(\pi,p),(\tilde{\pi},\tilde{p})}\frac{\left\lVert \mu_{t}(\pi,p)-\mu_{t}(\tilde{\pi},\tilde{p})\right\rVert _2}{D_{TV}(\rho_{t}(\pi,p),\rho_{t}(\tilde{\pi},\tilde{p}))}\leq \frac{2\sqrt{\mathcal{L}_{align}}\left(\sqrt{1/2}+\sqrt{d/\alpha}\right)}{D_{TV}(\rho_{t}(\pi,p),\rho_{t}(\tilde{\pi},\tilde{p}))}
\end{equation}
Namely, given fixed window divergence during optimization, $L_{E}$ is strictly upper-bounded by $\sqrt{\mathcal{L}_{align}}$.
\end{proof}

\begin{theorem}
Let the space of trajectory windows $\mathcal{W}$ be locally factorizable into dynamics-relevant features and nuisance features, such that trajectory window $w \approx (\mu, s)$. Then minimizing the InfoNCE Loss (Eq. \eqref{eq:app-infonce-decomposition}) implies minimizing the Frobenius norm of $\partial E/\partial s$.
\end{theorem}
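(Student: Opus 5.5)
We work with the alignment term of the decomposition in Eq.~\eqref{eq:app-infonce-decomposition}. Positive pairs are windows generated under the same dynamics parameters $\mu$. Under the local factorization $w\approx(\mu,s)$, a positive pair can therefore be written as $w=(\mu,s)$ and $w^+=(\mu,s^+)$. The two windows share the dynamics coordinate and differ only in the nuisance coordinate. The plan is to linearize $E$ around the anchor in the nuisance direction, which turns $\mathcal{L}_{align}$ into a quadratic form in $\partial E/\partial s$. We then show this quadratic form controls the squared Frobenius norm.

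The first step is a first-order Taylor expansion in $s$ with $\mu$ held fixed. Writing $\Delta s=s^+-s$ and $J_s=\partial E/\partial s$ evaluated at $(\mu,s)$,
\begin{equation}
E(\mu,s^+)-E(\mu,s)=J_s\,\Delta s+O(\|\Delta s\|_2^2).
\end{equation}
Substituting into $\mathcal{L}_{align}$ and dropping higher-order terms gives
\begin{equation}
\mathcal{L}_{align}\approx\mathbb{E}\left[\Delta s^\top J_s^\top J_s\,\Delta s\right]=\mathbb{E}\left[\operatorname{tr}\left(J_s^\top J_s\,\Delta s\Delta s^\top\right)\right].
\end{equation}
Here the expectation is over positive pairs.

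The second step makes the link to the Frobenius norm explicit. We need an assumption on the nuisance variation within a dynamics class. The natural one is that, conditioned on $(\mu,s)$, the nuisance displacement $\Delta s$ has second moment $\mathbb{E}[\Delta s\Delta s^\top\mid\mu,s]\succeq\sigma_s^2 I$ for some $\sigma_s>0$. This says positive pairs genuinely vary along every nuisance direction, as they do when they come from different time steps and initial states of the same dynamics. If $\Delta s$ is isotropic, equality holds and we get
\begin{equation}
\mathcal{L}_{align}\approx\sigma_s^2\,\mathbb{E}_{(\mu,s)}\left[\|J_s\|_F^2\right].
\end{equation}
Under the weaker condition we get $\mathcal{L}_{align}\gtrsim\sigma_s^2\,\mathbb{E}\|J_s\|_F^2$, using $\operatorname{tr}(AB)\ge\lambda_{\min}(B)\operatorname{tr}(A)$ for positive semidefinite $A,B$. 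In either case, driving $\mathcal{L}_{align}$, and hence the InfoNCE loss, down forces the expected $\|\partial E/\partial s\|_F^2$ down.

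The third step checks that the uniformity term does not counteract this. $\mathcal{L}_{uniform}$ involves negative pairs, which differ in $\mu$. We expand along the $\mu$ coordinate in the same way and write the gradient of $\mathcal{L}_{uniform}$ with respect to the encoder parameters. Under local factorization, the separation that uniformity rewards can be achieved through $\partial E/\partial\mu$ alone. So the minimizer of the combined objective places the spread in the $\mu$ directions and keeps $\partial E/\partial s$ small. This is the ``gradient orthogonality'' point from the main text.

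The main obstacle is making the second and third steps rigorous rather than heuristic. The Taylor remainder must be controlled, which needs $\Delta s$ small or $E$ to have a bounded second derivative in $s$. The nondegeneracy condition on $\mathbb{E}[\Delta s\Delta s^\top]$ must also be stated as an explicit assumption. For uniformity, the cleanest route is to argue that at fixed $\partial E/\partial\mu$, $\mathcal{L}_{uniform}$ does not strictly decrease by increasing $\partial E/\partial s$. I would state this as a stationarity argument on the combined loss and not attempt a global optimality claim.
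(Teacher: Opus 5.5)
Your first two steps are the paper's proof. The paper also writes $\delta=w^+-w\approx(0,\Delta s)$ and linearizes $E(w^+)-E(w)\approx(\partial E/\partial s)\Delta s$. It then concludes that, because $\Delta s\neq 0$, minimizing $\mathcal{L}_{align}$ forces $\|\partial E/\partial s\|_F^2$ down.

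Your condition $\mathbb{E}[\Delta s\Delta s^\top\mid\mu,s]\succeq\sigma_s^2 I$, combined with $\operatorname{tr}(AB)\ge\lambda_{\min}(B)\operatorname{tr}(A)$, is exactly what that last implication needs. The paper's $\Delta s\neq 0$ alone only controls $J_s$ on the span of the realized $\Delta s$. Your condition gives $\sigma_s^2\,\mathbb{E}\|J_s\|_F^2\le\mathcal{L}_{align}$ up to Taylor remainders. So this part is the same argument made precise. It shares the paper's caveat, which you already flag: positive pairs here are arbitrary windows under the same dynamics, so $\Delta s$ is not actually small.

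Your third step is not in the paper, and the route you propose for it would fail. The paper simply identifies ``minimizing InfoNCE'' with minimizing its alignment component and never argues about $\mathcal{L}_{uniform}$. Your claim is that, at fixed $\partial E/\partial\mu$, $\mathcal{L}_{uniform}$ does not strictly decrease as $\partial E/\partial s$ grows. That is false even in your own linearized picture.

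To see this, write $E(w)-E(w^-)\approx a+\varepsilon$ for a negative pair, with $a=J_\mu\Delta\mu$ and zero-mean $\varepsilon=J_s\Delta s$. The Hessian of $x\mapsto e^{-\|x\|_2^2}$ at $a$ is $e^{-\|a\|_2^2}(4aa^\top-2I)$. It is negative definite when $\|a\|_2<1/\sqrt{2}$, and negative on $a^\perp$ for every $a$. Hence $\mathbb{E}[e^{-\|a+\varepsilon\|_2^2}]<e^{-\|a\|_2^2}$ to second order whenever $\|a\|_2<1/\sqrt{2}$ or $\varepsilon\perp a$.

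This case actually occurs. The InfoNCE denominator sums over all $k\neq i$, so same- or nearby-dynamics windows, with $a$ zero or small, do enter as negatives. Uniformity therefore actively rewards nuisance spread and competes with alignment. The minimizer of the combined loss is a trade-off, not an encoder with $J_s\approx 0$.

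You should drop step 3, or replace it with an explicit assumption rather than something to be proved. Then state the conclusion as the paper effectively does: the alignment component penalizes $\mathbb{E}\|J_s\|_F^2$ at rate $\sigma_s^2$.
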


\begin{proof}
In our setting, a positive pair $(w, w^+)$ shares dynamics $\mu$ but differs in nuisance factors $s$ (e.g., initial state, noise). Define the perturbation $\delta = w^+ - w \approx (0, \Delta s)$. Using the first-order Taylor expansion of the encoder:
\begin{equation}
E(w^+) - E(w) \approx \mathbf{J}_E \delta\approx \frac{\partial E}{\partial \mu} \cdot 0 + \frac{\partial E}{\partial s} \cdot \Delta s
\end{equation}
The Alignment Loss in Eq. \eqref{eq:app-infonce-decomposition} explicitly minimizes this difference, and since $\Delta s$ represents natural variations in the environment (which are non-zero), the optimization must decrease the partial derivative with respect to nuisance factors:
\begin{equation}
\min \mathcal{L}_{align} \implies \min \mathbb{E} \left[ \left\| \frac{\partial E}{\partial s} \Delta s \right\|^2 \right] \implies \min \left\| \frac{\partial E}{\partial s} \right\|_F ^2
\end{equation}
Therefore, minimizing $\mathcal{L}_{InfoNCE}$ implies minimizing the Frobenius norm of $\partial E/\partial s$.
\end{proof}
\begin{remark}
We can also prove the upper-bounding of $L_{E}$ from this gradient orthogonality perspective. Encoder Jacobian decomposes into orthogonal components corresponding to dynamics and nuisance features:
\begin{equation}
\| \mathbf{J}_E \|^2_F = \left\| \frac{\partial E}{\partial \mu} \right\|^2_F + \left\| \frac{\partial E}{\partial s} \right\|^2_F
\end{equation}
Since the Lipschitz constant $L_E$ is upper-bounded by the spectral norm (therefore also upper-bounded by the Frobenius norm) of the full Jacobian $\mathbf{J}_E$, applying InfoNCE leads to a tighter Lipschitz constant by strictly decreasing the total norm $\|\mathbf{J}_E\|$, via the minimization of $\left\lVert \partial E/\partial s\right\rVert _{F}$.
\end{remark}
\section{Implementation Details}\label{sec:app-implementation-details}
\subsection{Training Procedure}\label{subsec:app-training-algorithm}
We provide the overall algorithmic process in Algorithm~\ref{alg:app-ldr-training-loop}.
\begin{algorithm}[t]
\caption{Joint Latent Dynamics Representation and Policy Learning}
\label{alg:app-ldr-training-loop}
\begin{algorithmic}[1]
\STATE {\bfseries Input:} Buffer $\mathcal{D}$, Encoder $q_\phi$, Decoder $p_\theta$, Policy $\pi_\psi$, Batch size $N$, Weights $\lambda_1, \lambda_2, \beta$
\WHILE{not converged}
    \STATE Collect transitions using $\pi_\psi$ and store in $\mathcal{D}$
    \STATE Sample batch $\mathcal{B} = \{(w_i, s_i, a_i, s'_{i},r_i)\}_{i=1}^N$
    \STATE Infer latent distribution: $z_i \sim q_\phi(\cdot | w_i)$

    \STATE $\mathcal{L}_{\text{rec}} \gets -\frac{1}{N} \sum_{i} \log p_\theta(s'_i \mid s_i, a_i, z_i)$
    \STATE $\mathcal{L}_{\text{KL}} \gets \frac{1}{N} \sum_{i} D_{KL}(q_\phi(\cdot|w_i) \| \mathcal{N}(0, \mathbf{I}))$
    \STATE Combine VAE objective: $\mathcal{L}_{\text{VAE}} \gets \mathcal{L}_{\text{rec}} + \beta \mathcal{L}_{\text{KL}}$
    \STATE Use $w_i$ under same dynamics to compute $\mathcal{L}_{\text{contrast}}$ with Eq. \eqref{eq:multi-positive-infonce}

    \STATE \textcolor{gray}{\# Assume latent is the same for $s_i$ and $s_i'$}
    \STATE Augment observations: $x_i \gets [s_i, z_i]$, $x_i' \gets [s_i', z_i]$
    \STATE Compute RL Loss $\mathcal{L}_{\text{rl}}(x_i,x_{i}',a_i,r_i)$

    \STATE $\mathcal{L} \gets \mathcal{L}_{\text{rl}} + \lambda_1 \mathcal{L}_{\text{VAE}} + \lambda_2 \mathcal{L}_{\text{contrast}}$
    \STATE Update $\phi$, $\theta$, $\psi$ via gradient descent on $\mathcal{L}$
\ENDWHILE
\end{algorithmic}
\end{algorithm}

\subsection{Dynamics Randomization}\label{subsec:app-implement-dynamics-randomization}
\begin{table}[ht]
    \centering
    \caption{Statistics of Environment Dimensions and Randomized Dynamics Parameters}
    \label{tab:app-env-statistics}
    \begin{tabular}{lccccccc}
        \toprule &
        \multicolumn{2}{c}{Dimensions} & \multicolumn{4}{c}{Rand. Params. Count} & \\
        \cmidrule(lr){2-3} \cmidrule(lr){4-7}
        Environment & Obs. & Act. & Fric. & Mass & Damp. & Torq. & Total \\
        \midrule
        Hopper      & 11 & 3 & $-$ & 4 & 3 & 1 & 8 \\
        Walker2d    & 17 & 6 & $-$ & 7 & 6 & 1 & 14 \\
        HalfCheetah & 17 & 6 & 9  & 7  & 6 & 1 & 23 \\
        Ant         & 27 & 8 & 14 & 13 & 8 & 1 & 36 \\
        \bottomrule
    \end{tabular}
\end{table}
\begin{table}[ht]
    \centering
    \caption{Dynamics Randomization Ranges}
    \label{tab:app-dynamics-ranges}
    \begin{tabular}{lcccc}
        \toprule
        Parameter & Hopper & Walker2d & HalfCheetah & Ant \\
        \midrule
        Mass    & $[0.5, 2.0)$ & $[0.5, 2.0)$ & $[0.5, 2.0)$ & $[0.5, 2.0)$ \\
        Damp.   & $[0.5, 2.0)$ & $[0.5, 2.0)$ & $[0.5, 2.0)$ & $[0.5, 2.0)$ \\
        Torq.   & $[0.5, 1.5)$ & $[0.5, 1.5)$ & $[0.5, 1.5)$ & $[0.5, 1.5)$ \\
        Fric.   & $-$          & $-$          & $[0.4, 1.0)$ & $[0.2, 1.0)$ \\
        \bottomrule
    \end{tabular}
\end{table}
To assess robustness against OOD dynamics, we introduce randomization across four core physical properties: body mass, joint damping, slide friction, and torque scale. Among these, body mass affects the inertial resistance and momentum of the robot links; joint damping alters the internal friction and energy dissipation within joints; slide friction changes the traction between the robot and the ground, influencing acceleration and stability; and torque scale simulates actuator strength or weakness, affecting the control authority. Environment statistics, including observation dimension, control dimension and number of dynamics parameters randomized are collected in Table~\ref{tab:app-env-statistics}. Aliases used: \textbf{Obs.} for observation dimension, \textbf{Act.} for action dimension, \textbf{Damp.} for joint damping, \textbf{Fric.} for slide friction, \textbf{Torq.} for torque scale, \textbf{Total} for the total number of dynamics parameters randomized. The scaling factors are sampled uniformly from the ranges specified in Table~\ref{tab:app-dynamics-ranges}. The sampling ranges are calibrated to induce significant trajectory diversity while maintaining task feasibility.

\subsection{Network Architecture}
We implement the policy, critic, and auxiliary dynamics networks using standard multi-layer perceptrons (MLPs) and convolutional encoders.
\\ \textbf{Dynamics Encoder ($q_\phi$):} The encoder processes the trajectory history window $w_t$ using a Temporal Convolutional Network (TCN~\cite{tcn}). It consists of three convolutional layers with kernel sizes $[8, 5, 5]$ and strides $[4, 1, 1]$, followed by a linear projection to the latent mean $\mu_\phi$ and log-standard deviation $\sigma_\phi$. We use ReLU activations for the intermediate layers.
\\ \textbf{Dynamics Decoder ($p_\theta$):} The decoder is a 3-layer MLP with hidden units $[256, 256]$ (or $[512, 512]$ for Ant) and ReLU activations. It takes the latent $z_t$ and the current state-action pair $(s_t, a_t)$ to predict the next state residual $\Delta s=s_{t+1} - s_t$. We normalize the target $\Delta_s$ using running mean and std calculated dynamically during training, to make the decoder loss invariant to state element changing rate. Since the latent $z_t$ from encoder is theoretically unbounded, we apply Layer-Normalization (LN~\cite{layer-normalization}) to latent vectors to improve training stability.
\\ \textbf{Policy and Critic:} Both the actor $\pi_\psi$ and critic $Q_\omega$ utilize 2-layer MLPs with 256 hidden units. They receive the concatenation of the current state $s_t$ and the latent context $z_t$ as input (critic additionally receives $a_t$ as input). As with decoder, we apply LN to the input latent vectors.

\subsection{Contrastive Training Details}
\textbf{Latent Dropout:} When conditioning a policy on a latent context vector extracted from a temporal history, the actor network can become overly reliant on the global latent representation, making the policy prone to failure when the latent context is not precise (e.g., in OOD cases). To increase the robustness of the policy, we implement latent dropout during both training and inference:
\begin{equation}
z_{rl} = z \odot M, \quad M \sim \operatorname{Bernoulli}(1 - p)
\end{equation}
where $p$ is the dropout probability. We empirically found that a conservative dropout rate of $p = 0.1$ effectively regularizes the policy across all environments without destroying the inferred dynamics context.
\\ \textbf{Anchor Dynamics Strategy:} To stabilize the contrastive learning process in vectorized environments, we implement an ``Anchor Dynamics'' sampling strategy. When collecting experience with $N$ parallel environments, we ensure that a subset of environments are reset to a fixed ``anchor'' dynamics parameter ID at the beginning of data collection. This ensures that every training batch contains multiple trajectories generated from the exact same underlying physics. By anchoring the batch distribution, the InfoNCE objective shifts from ``separating all random samples'' to the easier task of ``separating diverse clusters from a known anchor''. We found this strategy significantly accelerates convergence of the geometric structure, as the encoder can quickly latch onto the consistent features of the anchor trajectories to form a central reference cluster. We prevent the anchor from being overwritten by newly incoming data once the buffer is full.

\subsection{Environment Wrappers and Modifications}
\textbf{Ant Body Frame Wrapper:} For 3D omnidirectional agents like Ant, the standard observation space includes the global Cartesian velocity. However, from the perspective of underlying physical dynamics (e.g., damping or mass), moving forward at a given speed is functionally identical to moving backward or sideways at that same speed. If global velocities are passed directly to the trajectory encoder, this directional symmetry would cause the contrastive objective to split functionally equivalent dynamics into separate clusters based purely on heading. To resolve this, we wrap the environment to project the global velocity into the agent's local body frame, ensuring the latent representation is invariant to the agent's global orientation.
\\ \textbf{Walker2d Torque Wrapper:} Same as~\cite{so-cma}, we observed that the default ankle torque limits for walker2d ($[-100, 100]$) allows the agent to exploit unrealistic ``hopping'' gaits. To facilitate learning of natural walking gait, we reduce ankle joint torque limits to $[-20, 20]$.

\subsection{Hyperparameter Choices}\label{subsec:app-hyperparameter-details}
To balance the competing objectives of reward maximization, reconstruction, and contrastive shaping, we employ a weighted loss function:
\begin{equation}
\mathcal{L} = \lambda_{\text{rl}}\mathcal{L}_{\text{rl}} + \lambda_{\text{VAE}}\mathcal{L}_{\text{VAE}} + \lambda_{\text{contrast}}\mathcal{L}_{\text{contrast}}
\end{equation}
We fix $\lambda_{\text{rl}} = 1.0$, $\lambda_{\text{contrast}}=1.0$ and $\lambda_{\text{VAE}}=5.0$ across all environments, while tuning the reward scale to ensure gradient magnitude consistency. We utilize a trajectory sequence length of $H = 50$ and a contrastive temperature of $\tau = 1.0$. Crucially, contrastive learning requires a sufficient diversity of positive and negative pairs within each update step. We therefore collect $2048$ samples before running $2048$ steps of gradient descent with a batch size of $256$, ensuring the optimizer sees a dense sampling of the current policy's trajectory distribution. To prevent posterior collapse during VAE training, we adopt cost annealing~\cite{cost-annealing}, with KL $\beta$ increasing linearly from $0$ to the preset weight during the first $50\%$ of training, and remains constant afterwards. The networks are optimized using the Adam optimizer with a learning rate of $3 \times 10^{-4}$ for Hopper, Walker2d, HalfCheetah and $1\times 10^{-4}$ for Ant. A summary of the environment-specific hyperparameters is provided in Table~\ref{tab:app-hyperparameters}.

\begin{table}[h]
    \centering
    \caption{Key environment-specific hyperparameters for LDG.}
    \label{tab:app-hyperparameters}
    \begin{tabular}{lcccc}
        \toprule
        Parameter & Hopper & Walker2d & HalfCheetah & Ant \\
        \midrule
        Latent Dimension ($d_z$) & 3 & 3 & 3 & 5 \\
        Reward Scale & 0.008 & 0.004 & 0.004 & 0.006 \\
        KL Max $\beta$ (Annealing) & 0.1 & 0.07 & 0.07 & 0.04 \\
        \bottomrule
    \end{tabular}
\end{table}

Refer to Section~\ref{sec:app-hyperparameter-tuning-and-ablation} for a complete ablation of hyperparameter choices.

\subsection{Baseline Implementations}
Our baseline RMA and SAC shares the exact policy and critic backbone as LDG. When training RMA, the environment extractor is a two-layer MLP with hidden dimension being $[256,128]$. We use the same latent dimension for RMA and LDG. We also use Adam optimizer for RMA, and set learning rate to $3.0\times 10^{-4}$ for Walker2d, HalfCheetah, Ant environments, and $1.0\times 10^{-4}$ for Hopper environment.
\par For RMA adaptation phase, we construct the adaptation network to have the same architecture as LDG dynamics encoder $q_{\phi}$. Adaptation buffer size is set to $1.0\times 10^{5}$, and a total of $4.0\times 10^{5}$ transitions are collected during adaptation phase. We take one gradient step for every environment step with a batch size of $256$, using Adam optimizer with learning rate $5.0\times 10^{-4}$. Empirically, we found the adaptation network to steadily converge under this parameter setting.
\par For SO-CMA, we followed the original paper \cite{so-cma} and set rollout number to $1$, population size to $7$ and max iteration number to $6$. For every candidate latent, the policy rollouts for $1$ episode.

\section{Hyperparameter Tuning and Ablation Study}\label{sec:app-hyperparameter-tuning-and-ablation}
\subsection{Hyperparameter Ablations}\label{subsec:app-ablations}
Apart from the reward curves, we additionally evaluated the performance of different horizon $H$ and latent dimension $d_{z}$ configurations in ID, OOD, Var. Env and Struct. Fail settings.

\par \textbf{Latent Dimension ($d_z$):} Quantitative results are shown in Table~\ref{tab:app-ablation-dz}. Insufficient capacity (e.g., $d_z=2$) catastrophically bottlenecks information, causing severe performance degradation across both environments. Conversely, excessive capacity ($d_z \ge 7$) introduces manifold sparsity that destabilizes the contrastive uniformity loss, resulting in brittle policies. While extreme over-parameterization ($d_z=15$) allows the policy to partially recover through sheer parameter volume, it lacks the efficiency of a properly constrained bottleneck. Ultimately, $d_z=3$ (for Walker2D with observation dimension 17) and $d_z=5$ (for Ant with observation dimension 27) offer the optimal balance of representational expressivity and geometric tractability.

\begin{table}[ht]
    \centering
    \caption{Ablation of latent dimension ($d_z$) across Walker2d and Ant environments. Rewards are averaged over 5 random seeds. Top-1 result is highlighted using bold numbers.}
    \label{tab:app-ablation-dz}
    \renewcommand{\arraystretch}{1.1}
    \begin{tabular}{llccccc}
        \toprule
        \multirow{2}{*}{\textbf{Environment}} & \multirow{2}{*}{\textbf{$d_z$}} & \multirow{2}{*}{\textbf{ID}} & \multicolumn{2}{c}{\textbf{OOD Damping}} & \multirow{2}{*}{\textbf{Var. Env}} & \multirow{2}{*}{\textbf{Struct. Fail}} \\
        \cmidrule(lr){4-5}
        & & & \textbf{0.3} & \textbf{2.2} & & \\
        \midrule
        \multirow{5}{*}{\textbf{Walker2d}} 
        & 2 & 3845 & 3980 & 3970 & 3972 & 662 \\
        & 3 (Ours) & \textbf{4884} & 5342 & 5349 & \textbf{5293} & 952 \\
        & 7 & 4254 & 4391 & 4397 & 4512 & 417 \\
        & 10 & 4254 & 4848 & 4658 & 4680 & \textbf{1284} \\
        & 15 & 4011 & \textbf{5513} & \textbf{5486} & 4886 & 369 \\
        \midrule
        \multirow{5}{*}{\textbf{Ant}}      
        & 2 & 2097 & 3552 & 3545 & 2240 & \textbf{1468} \\
        & 5 (Ours) & 5183 & 4797 & \textbf{4789} & \textbf{3462} & 1003 \\
        & 7 & 3253 & 1045 & 1957 & 582 & 579 \\
        & 10 & 4481 & 1151 & 2669 & 672 & 522 \\
        & 15 & \textbf{5247} & \textbf{3251} & 5343 & 3462 & 682 \\
        \bottomrule
    \end{tabular}
\end{table}

\textbf{Trajectory Horizon Length ($H$):} We also ablate the trajectory horizon length, as shown in Table~\ref{tab:app-ablation-horizon}. The horizon dictates the temporal context window available for dynamics inference. Extremely short horizons ($H=10$, corresponding to $0.08$s of simulation time for Ant and $0.5$s for Walker2d) lack sufficient temporal context for the encoder to accurately infer underlying physical parameters. In contrast, excessively long horizons ($H=70$) integrate too much historical data, which dilutes the agent's immediate reactivity to recent state transitions. We find that $H=50$ provides an optimal balance. Notably, while it might theoretically seem that shorter horizons would permit faster reactions to sudden physical shifts (e.g., Var. Env, Struct. Fail scenarios), our empirical results demonstrate that the representation stability provided by a richer temporal context outweighs raw responsiveness, ultimately yielding superior closed-loop adaptation.

\begin{table}[ht]
    \centering
    \caption{Ablation of trajectory horizon length ($H$) across Walker2d and Ant environments. Rewards are averaged over 5 random seeds. Top-1 result is highlighted using bold numbers.}
    \label{tab:app-ablation-horizon}
    \renewcommand{\arraystretch}{1.1}
    \begin{tabular}{llccccc}
        \toprule
        \multirow{2}{*}{\textbf{Environment}} & \multirow{2}{*}{\textbf{$H$}} & \multirow{2}{*}{\textbf{ID}} & \multicolumn{2}{c}{\textbf{OOD Damping}} & \multirow{2}{*}{\textbf{Var. Env}} & \multirow{2}{*}{\textbf{Struct. Fail}} \\
        \cmidrule(lr){4-5}
        & & & \textbf{0.3} & \textbf{2.2} & & \\
        \midrule
        \multirow{4}{*}{\textbf{Walker2d}} 
        & 10 & 4217 & 4490 & 4512 & 4486 & 236 \\
        & 30 & 4315 & 4510 & 4480 & 4530 & 393 \\
        & 50 (Ours) & 4884 & \textbf{5342} & \textbf{5349} & \textbf{5293} & 952 \\
        & 70 & \textbf{5009} & 5121 & 5120 & 5137 & \textbf{1489} \\
        \midrule
        \multirow{4}{*}{\textbf{Ant}}      
        & 10 & 4087 & 2021 & 1301 & 1688 & 868 \\
        & 30 & 2263 & 2326 & 3012 & 1355 & \textbf{1733} \\
        & 50 (Ours) & \textbf{5184} & \textbf{4797} & \textbf{4789} & \textbf{3462} & 1003 \\
        & 70 & 2961 & 3318 & 2440 & 2129 & 733 \\
        \bottomrule
    \end{tabular}
\end{table}

\subsection{Tuning Heuristics}
We use reward scale to balance RL and representation objectives, and simply tune it such that the actor and decoder losses share similar magnitudes. We found this heuristic to work well across all environments.
\par For the encoder, we typically set $\beta = 0.05$ initially and reduce it if posterior collapse is observed. Across all environments, we find that the encoder KL loss stabilizes at approximately $1.3$, which we suggest as a reference target when tuning $\beta$.

\section{Additional Experiments}\label{sec:app-other-exp}
\subsection{Physical Isomorphism}
\begin{figure}[t]
    \centering
    \includegraphics[width=0.6\linewidth]{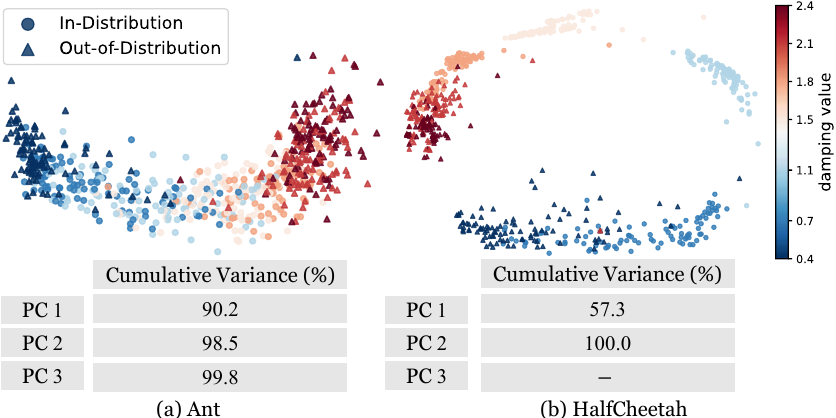}
    \caption{Latent topology via PCA. We plot explained variance of the first three PCs for the damping manifold. Ant exhibits an effectively 3D structure, while HalfCheetah collapses to 2D (PC3 negligible), consistent with their physical constraints.}
    \label{fig:app-topology-discovery}
\end{figure}
Beyond structure discovery via functional equivalence, the learned latent space exhibits a topological complexity that mirrors the physical constraints of the agent. We analyze this geometric complexity by examining the effective dimensionality of the sub-manifold generated by varying a single parameter (damping). We apply Principal Component Analysis (PCA) to the latent codes to obtain cumulative variances. For Ant, we project the original latent code $z\in\mathbb{R}^5$ to $\mathbb{R}^3$ formed by the first 3 principal components, and for HalfCheetah $z\in\mathbb{R}^3$ we directly visualize the raw latent space. As visualized in Fig.~\ref{fig:app-topology-discovery}, the sub-manifold for HalfCheetah (a planar, 2D agent) collapses effectively to a 2D surface (cumulative variance reaches 100\% within 2 components). In contrast, for Ant (an omnidirectional 3D agent), the damping sub-manifold spans a meaningful volume across 3 principal components within its 5-dimensional latent space. This suggests that LDG does not merely learn a generic 1D scale for parameters, but constructs a latent topology that is \textit{isomorphic} to the agent's control complexity. For a simple 2D agent like HalfCheetah, the effect of damping can be adequately compressed into a 2D plane to represent amplitude and frequency shifts. However, for the 3D Ant, damping alters dynamics across multiple axes, affecting stability, yaw rotation, and leg coordination differentially. By maintaining a higher-dimensional (3D) representation for this single parameter, LDG avoids over-compressing these complex behavioral shifts into a simple scalar. This provides the policy with a behaviorally isomorphic context, allowing it to distinguish how the dynamics have shifted (e.g., differentiating between uniform drag vs. rotational instability), rather than just estimating that resistance has increased.

\subsection{Additional Baseline Comparisons}\label{subsec:app-cadm}
Context-Aware Dynamics Models (CaDM \cite{forward-backward-dynamics-model}) serves as a direct baseline for our approach, as it also follows an outcome-centric paradigm and infers latent contexts from interaction histories. To rigorously evaluate the benefits of our explicitly geometric approach, we compare LDG against pure CaDM and a contrastively augmented variant (CaDM + InfoNCE), which applies our proposed contrastive objective to the CaDM representation.
\par As shown in Table~\ref{tab:cadm-baseline}, pure LDG significantly outperforms pure CaDM across all environments and metrics. This validates the superiority of our geometry-aware approach over standard contextual forward-backward prediction. Interestingly, augmenting CaDM with our InfoNCE objective yields environment-dependent results. In Walker2d, it degrades performance, while in Ant environment, adding the contrastive loss substantially improves upon pure CaDM across all metrics, even surpassing pure LDG in the Structural Failure scenario. This demonstrates that our contrastive objective can successfully impart crucial topological structure to existing baseline representations, effectively enhancing their out-of-distribution robustness when objective gradients are properly aligned.

\begin{table}[ht]
    \centering
    \caption{Performance comparison of LDG against the CaDM baseline and a contrastively augmented variant (CaDM + InfoNCE). Rewards are averaged over 5 random seeds.}
    \label{tab:cadm-baseline}
    \renewcommand{\arraystretch}{1.1}
    \begin{tabular}{llccccc}
        \toprule
        \multirow{2}{*}{\textbf{Environment}} & \multirow{2}{*}{\textbf{Method}} & \multirow{2}{*}{\textbf{ID}} & \multicolumn{2}{c}{\textbf{OOD Damping}} & \multirow{2}{*}{\textbf{Var. Env}} & \multirow{2}{*}{\textbf{Struct. Fail}} \\
        \cmidrule(lr){4-5}
        & & & \textbf{0.3} & \textbf{2.2} & & \\
        \midrule
        \multirow{3}{*}{\textbf{Walker2d}} 
        & CaDM & 3768 & 3990 & 4002 & 4005 & 225 \\
        & CaDM + InfoNCE & 3245 & 3726 & 3787 & 3313 & \textbf{1300} \\
        & LDG (Ours) & \textbf{4892} & \textbf{5343} & \textbf{5336} & \textbf{5293} & 952 \\
        \midrule
        \multirow{3}{*}{\textbf{Ant}} 
        & CaDM & 2467 & 1491 & 2319 & 1175 & 661 \\
        & CaDM + InfoNCE & 3790 & 2033 & 3006 & 2057 & \textbf{2049} \\
        & LDG (Ours) & \textbf{5184} & \textbf{4797} & \textbf{4789} & \textbf{3462} & 1003 \\
        \bottomrule
    \end{tabular}
\end{table}

\subsection{Approximating Encoder Lipschitz Constant}\label{subsec:app-lipschitz}
\begin{table}[htb]
    \centering
    \caption{Empirical approximation of the encoder Lipschitz constant ($\hat{L}_E$) in the Walker2d environment.}
    \label{tab:app-lipschitz-approx}
    \renewcommand{\arraystretch}{1.1}
    \begin{tabular}{lc}
        \toprule
        \textbf{Method} & $\hat{\textbf{L}}_{\textbf{E}}$ \\
        \midrule
        LDG (Ours) & 7.84 \\
        VAE & 20.77 \\
        Spectral Normalization (SN) & 3.26 \\
        \bottomrule
    \end{tabular}
\end{table}

To directly support Theorem~\ref{thm:app-regret-bound}, we provide an empirical comparison of the Lipschitz constant between our method (LDG), a standard VAE, and Spectral Normalization (SN) in the Walker2d environment. The formal definition of $L_E$ (Definition~\ref{def:app-encoder-smoothness}) relies on the Total Variation divergence of continuous window distributions, which is intractable to compute directly. Therefore, we estimate an empirical Lipschitz constant, $\hat{L}_E$, using local gradient perturbations over the sampled trajectory buffer. For a given batch of trajectory windows $w_i$, we approximate the spectral norm of the encoder's Jacobian using random, $L_2$-normalized projection vectors $\mathbf{v}$. The empirical Lipschitz constant is estimated by taking the supremum over the dataset and the projection vectors:
\begin{equation}
\hat{L}_E \approx \max_{w_i \in \mathcal{B}} \max_{||\mathbf{v}||_2=1} \left\lVert \nabla_{w_i} \left( \mathbf{v}^{\top} E(w_i) \right) \right\rVert_2
\end{equation}
where $\mathcal{B}$ represents the replay buffer, and $E(w_i)$ is the mean of the encoder's predicted latent distribution. It is important to note that this gradient-based metric is an imperfect approximation, discrepancy arises because our approximation measures local sensitivity on high-dimensional, empirical trajectory data distributions rather than computing exact global analytic bounds across the entire continuous space. Despite this inaccuracy, the \textit{relative magnitudes} of $\hat{L}_E$ across different methods remain highly informative and serve as a reliable proxy for comparing geometric smoothness.
\par As shown in Table~\ref{tab:app-lipschitz-approx}, the purely reconstructive VAE exhibits a highly volatile latent space ($\hat{L}_E = 20.77$), making it highly susceptible to input perturbations. Our contrastive objective (LDG) significantly smooths this geometry ($\hat{L}_E = 7.84$), effectively bridging the gap to theoretical stability without explicitly restricting the network's weights. While Spectral Normalization yields the smoothest representation ($\hat{L}_E = 3.26$), as discussed in Section \ref{subsec:alt-regularizer} in the main thesis, this rigid constraint prevents the formation of meaningful topological structure, ultimately degrading OOD adaptation.

\end{document}